\documentclass[11pt, letterpaper]{article}

\usepackage[round]{natbib}
\let\cite\citep

\usepackage{fullpage}
\usepackage{dirtytalk}

\setlength{\parindent}{0pt}
\setlength{\parskip}{3pt}

\usepackage{mathtools, amsthm, amsmath, amssymb}
\mathtoolsset{showmanualtags}
\usepackage{xcolor}
\usepackage{tabularx}
\usepackage{bbm}
\usepackage{makecell}

\usepackage[utf8]{inputenc} % allow utf-8 input
\usepackage[T1]{fontenc}    % use 8-bit T1 fonts
\usepackage{url}            % simple URL typesetting
\usepackage{booktabs}       % professional-quality tables
\usepackage{amsfonts}       % blackboard math symbols
\usepackage{nicefrac}       % compact symbols for 1/2, etc.
\usepackage{microtype}      % microtypography
\usepackage{xcolor}         % colors

\usepackage{xspace}
\usepackage{xfrac}
\usepackage{verbatim}

\usepackage{algorithm}
\usepackage{algpseudocode}
\usepackage{todonotes}
\usepackage{subcaption}

\usepackage{nicefrac}

\allowdisplaybreaks[1]

\newtheorem{definition}{Definition}
\newtheorem{theorem}{Theorem}
\newtheorem{lemma}{Lemma}

\newtheorem{remark}{Remark}
\newtheorem{assumption}{Assumption}

\newcommand{\finf}{f^{\mathrm{inf}}}

\newcommand{\Diag}{\mathrm{Diag}}

\newcommand{\tr}{\mathrm{tr}}

\DeclareMathOperator{\bfC}{\mathbf{C}}
\DeclareMathOperator{\mcL}{\tilde{\mathbf{L}}}

\DeclareMathOperator{\cb}{\tilde{\mathrm{b}}}

\DeclareMathOperator{\moL}{\overline{\mathbf{L}}}
\DeclareMathOperator{\moB}{\overline{\mathbf{B}}}
\DeclareMathOperator{\moD}{\overline{\mathbf{D}}}
\DeclareMathOperator{\moW}{\overline{\mathbf{W}}}
\DeclareMathOperator{\ob}{\overline{b}}
\DeclareMathOperator{\oCb}{\overline{\mathbf{C}b}}

\DeclareMathOperator{\bb}{\mathrm{b}}
\DeclareMathOperator{\oDb}{\overline{\mD^{-\frac{1}{2}}\bb}}
\newcommand{\cDb}{\widetilde{\mD\bb}}

\DeclareMathOperator{\E}{\mathbf{E}}

\newcommand{\Exp}[1]{{\color{black}  \mathbb{E}}\left[#1\right]}
\newcommand{\ExpSub}[2]{{\mathbb{E}}_#1\left[#2\right]}

\newcommand{\Rd}{\mathbb{R}^d}
\newcommand{\R}{\ensuremath{\mathbb{R}}}

\newcommand{\eps}{\varepsilon}

\def\<#1,#2>{\langle #1,#2\rangle}

\newcommand{\norm}[1]{\|#1\|}
\newcommand{\sqn}[1]{\norm{#1}^2}
\newcommand{\Norm}[1]{\left\|#1\right\|}
\newcommand{\sqN}[1]{\Norm{#1}^2}

\newcommand\br[1]{\left ( #1 \right )}
\newcommand\sbr[1]{\left[#1\right]}
\newcommand\cbr[1]{\left\{#1\right\}}

\newcommand{\cC}{\mathcal{C}}
\newcommand{\cQ}{\mathcal{Q}}

\newcommand{\cO}{\mathcal{O}}

\newcommand{\mA}{\mathbf{A}}
\newcommand{\mB}{\mathbf{B}}
\newcommand{\mC}{\mathbf{C}}
\newcommand{\mD}{\mathbf{D}}

\newcommand{\mW}{\mathbf{W}}
\newcommand{\mL}{\mathbf{L}}

\newcommand{\mI}{\mathbf{I}}

\newcommand{\bU}{\mathbb{U}}

\newcommand{\bR}{\mathbb{R}}

\newcommand{\eqdef}{\coloneqq}
\DeclareMathOperator*{\argmin}{arg\,min}

\definecolor{darkscarlet}{rgb}{0.34, 0.01, 0.1}
\definecolor{yaleblue}{rgb}{0.06, 0.3, 0.57}
\definecolor{darkpowderblue}{rgb}{0.0, 0.2, 0.6}

\usepackage[pagebackref=true]{hyperref}
\hypersetup{
    colorlinks=true,
    citecolor=darkscarlet,
    linkcolor=darkpowderblue,
    filecolor=magenta,      
    urlcolor=yaleblue,
    menucolor=gray
}

\renewcommand*{\backrefalt}[4]{%
    \ifcase #1 \footnotesize{(Not cited.)}%
    \or        \footnotesize{(Cited on page~#2)}%
    \else      \footnotesize{(Cited on pages~#2)}%
    \fi}

\title{\textbf{Towards a Better Theoretical Understanding of Independent Subnetwork Training}}
\date{}

\author{%
    Egor Shulgin \qquad Peter Richtárik \\
    \phantom{x}
    \\
    King Abdullah University of Science and Technology (KAUST) \\
    Thuwal, Saudi Arabia
}

\begin{document}

\maketitle

\begin{abstract}
Modern advancements in large-scale machine learning would be impossible without the paradigm of data-parallel distributed computing. Since distributed computing with large-scale models imparts excessive pressure on communication channels, significant recent research has been directed toward co-designing communication compression strategies and training algorithms with the goal of reducing communication costs. While pure data parallelism allows better data scaling, it suffers from poor model scaling properties. Indeed, compute nodes are severely limited by memory constraints, preventing further increases in model size. For this reason, the latest achievements in training giant neural network models also rely on some form of model parallelism. In this work, we take a closer theoretical look at Independent Subnetwork Training (IST), which is a recently proposed and highly effective technique for solving the aforementioned problems. We identify fundamental differences between IST and alternative approaches, such as distributed methods with compressed communication, and provide a precise analysis of its optimization performance on a quadratic model.
\end{abstract}

\section{Introduction}

A huge part of today's machine learning success is driven by the possibility of building more and more complex models and training them on increasingly larger datasets. This rapid progress has become feasible due to advancements in distributed optimization, which is necessary for proper scaling when the size of the training data grows~\cite{zinkevich2010parallelized}. In a typical scenario, data parallelism is used for efficiency and implies sharding the dataset across computing devices. This allowed very efficient scaling and acceleration of training moderately sized models by using additional hardware \cite{goyal2018accurate}. However, this data parallel approach can suffer from communication bottleneck, which has sparked extensive research on distributed optimization with compressed communication of the parameters between nodes \cite{alistarh2017qsgd, konecny2017federated, seide20141}.

\subsection{The need for model parallelism}
Despite its efficiency, data parallelism has some fundamental limitations when it comes to scaling up the model size. As the dimensions of a model increase, the amount of memory required to store and update the parameters also increases, which becomes problematic due to resource constraints on individual devices. This has led to the development of model parallelism \cite{dean2012large,Hydra}, which splits a large model across multiple nodes, with each node responsible for computations of parts of the model \cite{farber1997parallel, zhang1989efficient}. 
However, naive model parallelism also poses challenges because each node can only update its portion of the model based on the data it has access to. This creates a need for very careful management of communication between devices. Thus, a combination of both data and model parallelism is often necessary to achieve efficient and scalable training of huge models.

\textbf{IST.}
Independent Subnetwork Training (IST) is a technique that suggests dividing a neural network into smaller subparts, training them in a distributed parallel fashion, and then aggregating the results to update the weights of the whole model. In IST, every subnetwork can operate independently and has fewer parameters than the full model, which not only reduces the load on computing nodes but also results in faster synchronization. A generalized analog of the described method is formalized as an iterative procedure in Algorithm \ref{alg:IST} and schematically depicted in Figure \ref{fig:IST_schematic}.
IST paradigm was pioneered by \citet{yuan2022distributed} for networks with fully connected layers and was later extended to ResNets \cite{dun2022resist} and Graph architectures \cite{wolfe2021gist}. Previous experimental studies have shown that IST is a very promising approach for various applications as it allows to effectively combine data and model parallelism and train larger models with limited compute. 
In addition, \citet{liao2022on} performed theoretical analysis of IST for overparameterized single hidden layer neural networks with ReLU activations. The idea of IST was also recently extended to the federated setting via an asynchronous distributed dropout technique \cite{dun2023efficient}.

\textbf{Federated Learning.} Another important setting when the data is distributed (due to privacy reasons) is Federated Learning \cite{kairouz2021advances, konecny2017federated, mcmahan2017communication}. In this scenario, computing devices are often heterogeneous and more resource-constrained \cite{caldas2018expanding} (e.g.~mobile phones) in comparison to data-center settings. Such challenges have prompted extensive research efforts into selecting smaller and more efficient submodels for local on-device training \cite{alam2022fedrolex, charles2022federated, chen2022fedobd, diao2020heterofl, horvath2021fjord, jiang2022model, lin2022federated, qiu2022zerofl, wen2022federated, yang2022partial}. 
Many of these works propose approaches to adapt submodels, often tailored to specific neural network architectures, based on the capabilities of individual clients for various machine learning tasks. 
However, there is a lack of comprehension regarding the theoretical properties of these methods.

\subsection{Summary of contributions}
After reviewing the literature, we identified a glaring gap in the rigorous understanding of IST convergence, directly motivating our research. The main contributions of this paper include: 
\begin{itemize}
    \item A novel approach to analyzing distributed methods that combine data and model parallelism by operating with sparse submodels for a quadratic model.

    \item The first analysis of independent subnetwork training in homogeneous and heterogeneous scenarios without restrictive assumptions on gradient estimators.
    
    \item Identification of the settings when IST can optimize very efficiently or not converge to the optimal solution but only to an irreducible neighborhood that is also tightly characterized.
    
    \item Empirical validation of the proposed theory through experiments for several practical settings.
\end{itemize}

\section{Formalism and setup} \label{sec:setup}

\begin{algorithm*}[tb]
\begin{algorithmic}[1] 
\caption{Distributed Submodel (Stochastic) Gradient Descent} \label{alg:IST}
    \State \textbf{Parameters:} learning rate $\gamma>0$; sketches $\mC_1, \ldots, \mC_n$; initial model $x^0 \in \bR^d$
    \For{$k = 0, 1, 2 \ldots$}
    \State Select submodels $w_i^k = \mC_i^k x^k$ for $i \in [n]$ and broadcast to all computing nodes
    \For{$i = 1, \ldots, n$ in parallel}
    \State Compute local (stochastic) gradient w.r.t. submodel: $\mC_i^k \nabla f_i(w_i^k)$
    \State Take (maybe multiple) gradient descent step $z_i^+ = w_i^k - \gamma \mC_i^k \nabla f_i(w_i^k)$
    \State Send $z_i^+$ to the server 
    \EndFor
    \State Aggregate/merge received submodels: $x^{k+1} = \frac{1}{n} \sum_{i=1}^n z_i^+$
    \EndFor
\end{algorithmic}
\end{algorithm*}

We consider the standard optimization formulation of a distributed/federated learning problem \cite{wang2021field}
\begin{equation} \label{eq:general_problem}
  \min \limits_{x \in \bR^d}~\sbr{f(x) \eqdef \frac{1}{n} \sum \limits_{i=1}^n f_i(x)},
\end{equation}
where $n$ is the number of clients/workers, and each $f_i: \bR^d \to \bR$ represents the loss of the model parameterized by vector $x \in \bR^d$ on the data of client $i$.

A typical Stochastic Gradient Descent (SGD)-type method for solving this problem has the form
\begin{equation} \label{eq:SGD_generic} 
  x^{k+1} = x^k - \gamma g^k, \qquad g^k = \frac{1}{n}\sum \limits_{i=1}^n g_i^k,
\end{equation}
where $\gamma>0$ is the stepsize and $g_i^k$ is a suitably constructed estimator of $\nabla f_i(x^k)$. In the distributed setting, computation of gradient estimators $g_i^k$ is typically performed by clients, and the results are sent to the server, which subsequently performs aggregation via averaging $g^k = \frac{1}{n}\sum_{i=1}^n g_i^k$. The average is then used to update the model $x^{k+1}$ via a gradient-type method \eqref{eq:SGD_generic}, and at the next iteration, the model is broadcasted back to the clients. The process is repeated iteratively until a suitable model is found.

One of the main techniques used to accelerate distributed training is lossy \textit{communication compression} \cite{alistarh2017qsgd, konecny2017federated, seide20141},
which suggests applying a (possibly randomized) lossy compression mapping $\cC$ to a vector/matrix/tensor $x$ before broadcasting. This reduces the bits sent per communication round at the cost of transmitting a less accurate estimate $\cC(x)$ of $x$. 
Described technique can be formalized in the following definition.

\begin{definition}[Unbiased compressor] \label{def:unbiased_compressor}
A randomized mapping $\cC: \bR^d \to \bR^d$ is an {\bf unbiased compression operator} ($\cC \in \bU(\omega)$ for brevity) if for some $\omega \geq 0$ and $\forall x \in \bR^d$
\begin{equation} \label{eq:unbiased_compressor}
    \Exp{\cC(x)} = x, \qquad \Exp{\norm{\cC(x) - x}^2} \leq \omega \norm{x}^2.
\end{equation}
\end{definition}
A notable example of a mapping from this class is the \textit{random sparsification} (\texttt{Rand-q} for $q \in \{1, \dots, d\}$) operator defined by
\begin{equation} \label{eq:rand-k}
    \cC_{\texttt{Rand-q}}(x) \eqdef \mC_q x = \frac{d}{q} \sum\limits_{i\in S} e_i e_i^\top x ,
\end{equation}
where $e_1,\dots,e_d \in\bR^d$ are standard unit basis vectors, and $S$ is a random subset of $[d] \eqdef \{1, \dots, d\}$ sampled from the uniform distribution on the all subsets of $[d]$ with cardinality $q$. \texttt{Rand-q} belongs to $\bU\br{d/q-1}$, which means that the more elements are \say{dropped} (lower $q$), the higher the variance $\omega$ of the compressor.

In this work, we are mainly interested in a somewhat more general class of operators than mere sparsifiers. In particular, we are interested in compressing via the application of random matrices, i.e., via {\em sketching}. A sketch $\mC_i^k \in \R^{d\times d}$ can be used to represent submodel computations in the following way:
\begin{equation*}
    g_i^k \eqdef \mC_i^k \nabla f_i(\mC_i^k x^k),
\end{equation*}
where we require $\mC_i^k$ to be a symmetric positive semi-definite matrix. Such gradient estimates correspond to computing the local gradient with respect to a sparse submodel $\mC_i^k x^k$, and additionally sketching the resulting gradient with the same matrix $\mC_i^k$ to guarantee that the resulting update lies in the lower-dimensional subspace.

Using this notion, Algorithm \ref{alg:IST} (with one local gradient step) can be represented as:
\begin{equation} \label{eq:IST} 
    x^{k+1} = \frac{1}{n} \sum \limits_{i=1}^n \sbr{\mC_i^k x^k - \gamma \mC_i^k \nabla f_i(\mC_i^k x^k)},
\end{equation}
which is equivalent to the SGD-type update \eqref{eq:SGD_generic} when the
\emph{perfect reconstruction} property holds  (with probability one)
$$\mC^k \eqdef \frac{1}{n}\sum \limits_{i=1}^n \mC_i^k = \mathbf{I},$$ where $\mathbf{I}$ is the identity matrix.
This property is inherent for a specific class of compressors that are particularly useful for capturing the concept of an \textit{independent} subnetwork partition.

\begin{definition}[Permutation sketch] \label{def:permutation}
Assume that model size is greater than the number of clients $d \geq n$ and $d=q n$, where $q \geq 1$ is an integer\footnote{While this condition may look restrictive, it naturally holds for distributed learning in a data-center setting. Permutation sparsifiers were introduced by \cite{szlendak2022permutation} and generalized to other scenarios (like $n \geq d$).}.
Let $\pi=(\pi_1, \ldots, \pi_d)$ be a random permutation of $[d]$. Then for all $i \in [n]$, we define $\texttt{Perm-q}$ operator
\begin{equation} \label{eq:permutation}
    \mC_i \eqdef n \cdot \sum \limits_{j=q(i-1)+1}^{q i} e_{\pi_j} e_{\pi_j}^\top.
\end{equation}
\end{definition}
$\texttt{Perm-q}$ is unbiased and can be conveniently used for representing a structured decomposition of the model, such that every client $i$ is responsible for computations over a submodel $\mC_i x^k$.

Our convergence analysis relies on the assumption that was previously used for coordinate descent-type methods.
\begin{assumption}[Matrix smoothness] \label{ass:matrix-smoothness}
 A differentiable function $f: \bR^d \to \bR$ is $\mathbf{L}$-smooth, if there exists a positive semi-definite matrix $\mL \in \bR^{d \times d}$ such that
\begin{equation}\label{eq:L-matrix-smooth}
   f(x + h) \leq f(x) + \left\langle\nabla f(x), h \right\rangle + \frac{1}{2} \left\langle \mL h, h \right\rangle, \qquad \forall x, h \in \Rd.
\end{equation}
\end{assumption}
A standard $L$-smoothness condition is obtained as a special case of \eqref{eq:L-matrix-smooth} for $\mL = L \cdot \mI$. 
Matrix smoothness was previously used for designing data-dependent gradient sparsification to accelerate optimization in communication-constrained settings \cite{safaryan2021smoothness, wang2022theoretically}.

\subsection{Issues with existing approaches}

Consider the simplest gradient descent method with a compressed model in the single-node setting:
\begin{equation} \label{eq:CMGD} 
    x^{k+1} = x^k - \gamma \nabla f (\cC(x^k)).
\end{equation}

Algorithms belonging to this family require a different analysis in comparison to SGD \cite{gorbunov2020unified, gower2019sgd}, Distributed Compressed Gradient Descent \cite{alistarh2017qsgd, khirirat2018distributed}, and Randomized Coordinate Descent \cite{nesterov2012efficiency, richtarik2014iteration}-type methods because the gradient estimator is no longer unbiased 
\begin{equation*}
    \Exp{\nabla f (\cC(x))} \neq \nabla f(x) = \Exp{\cC(\nabla f(x))}.
\end{equation*}

This is why such kind of algorithms \eqref{eq:CMGD} are harder to analyze. So, prior results for \textit{unbiased} SGD \cite{khaled2022better} cannot be directly reused.
Furthermore, the nature of the bias in this type of gradient estimator does not exhibit additive noise, thereby preventing the application of previous analyses for biased SGD \cite{ajalloeian2020convergence}.

An assumption like the bounded stochastic gradient norm extensively used in previous works \cite{lin2019dynamic, zhou2022convergence} hinders an accurate understanding of such methods. This assumption hides the fundamental difficulty of analyzing a biased gradient estimator: 
\begin{equation} \label{eq:bounded_gradient}
    \Exp{\sqN{\nabla f(\cC(x))}} \leq G
\end{equation}
and may not hold, even for quadratic functions $f(x) = x^\top \mA x$. In addition, in the distributed setting, such a condition can result in vacuous bounds \cite{khaled2020tighter} as it does not capture heterogeneity accurately.

\subsection{Simplifications taken}

To conduct a thorough theoretical analysis of methods that combine data with model parallelism, we simplify the algorithm and problem setting to isolate the unique effects of this approach. The following considerations are made:
\begin{enumerate}
    \item[\textbf{(a)}] We assume that every node $i$ computes the true gradient at the submodel $\mC_i \nabla f_i(\mC_i x^k)$. 
    \item[\textbf{(b)}] A notable difference compared to the original IST Algorithm \ref{alg:IST} is that workers perform a single gradient descent step (or just gradient computation). 
    \item[\textbf{(c)}] Finally, we consider a special case of a quadratic model \eqref{eq:het_gen_quad_problem} as a loss function \eqref{eq:general_problem}.
\end{enumerate}

Condition \textbf{(a)} is mainly for the sake of simplicity and clarity of exposition and can be generalized to stochastic gradient estimators with bounded variance.
Condition \textbf{(b)} is imposed because local steps did not bring any theoretical efficiency improvements for heterogeneous settings until very recently \cite{mishchenko2022proxskip}, and even then, only with the introduction of additional control variables, which goes against the requirements of resource-constrained device settings. The reason behind \textbf{(c)} is that despite its apparent simplicity, the quadratic problem has been used extensively to study properties of neural networks \cite{zhang2019algorithmic, zhu2022quadratic}. Moreover, it is a non-trivial model, which makes it possible to understand complex optimization algorithms \cite{arjevani2020tight, cunha2022only, goujaud2022super}. 
The quadratic problem is suitable for observing complex phenomena and providing theoretical insights, which can also be observed in practical scenarios.
 Finally, Appendix \ref{sec:generalization} presents a generalization for smooth functions.

Having said that, we consider a special case of problem \eqref{eq:general_problem} for symmetric matrices $\mL_i$
\begin{equation} \label{eq:het_gen_quad_problem}
      f(x) = \frac{1}{n} \sum \limits_{i=1}^n f_i(x), \qquad f_i(x) \equiv \frac{1}{2} x^\top \mL_i x - x^\top \bb_i.
\end{equation}

In this case, $f(x)$ is $\moL$-smooth, and $\nabla f(x) = \moL x - \ob$, where $\moL = \frac{1}{n} \sum_{i=1}^n \mL_i$ and $\ob \eqdef \frac{1}{n} \sum_{i=1}^n \bb_i$.

\section{Results in the interpolation case}

First, let us examine the case of $\bb_i \equiv 0$, which we call interpolation for quadratics, and perform the analysis for general sketches $\mC_i^k$.
In this case, the gradient estimator \eqref{eq:SGD_generic} takes the form
\begin{equation} \label{eq:het_estimator}
   g^k = \frac{1}{n} \sum \limits_{i=1}^n \mC_i^k \nabla f_i(\mC_i^k x^k) =  \frac{1}{n} \sum \limits_{i=1}^n \mC_i^k \mL_i \mC_i^k x^k = \moB^k x^k
\end{equation}
where $\moB^k \eqdef \frac{1}{n}\sum_{i=1}^n  \mC_i^k \mL_i \mC_i^k$.
We prove the following result for a method with such an estimator.

\begin{theorem} \label{thm:het_quad}
Consider the method \eqref{eq:SGD_generic} with estimator \eqref{eq:het_estimator} for a quadratic problem \eqref{eq:het_gen_quad_problem} with $\moL \succ 0$ and $\bb_i \equiv 0$. Then if $\moW \eqdef \frac{1}{2} \Exp{\moL \moB^k + \moB^k \moL} \succeq 0$ and there exists a constant $\theta > 0$:
\begin{equation} \label{eq:het_exp_sep}
    \Exp{\moB^k \moL \moB^k} \preceq \theta \moW,
\end{equation}
and the step size is chosen as $0 < \gamma \leq \frac{1}{\theta}$, the iterates satisfy
\begin{equation} \label{eq:het_res}
    \frac{1}{K} \sum \limits_{k=0}^{K-1} \Exp{\sqN{\nabla f(x^k)}_{\moL^{-1} \moW \moL^{-1}}}
    \leq 
    \frac{2 \br{f(x^0) - \Exp{f(x^{K})}}}{\gamma K},
\end{equation}
and
\begin{equation} \label{eq:het_res_it}
    \Exp{\sqn{x^k - x^\star}_{\moL}} \leq 
    \br{1 - \gamma \lambda_{\min} \br{\moL^{-\frac{1}{2}}\moW\moL^{-\frac{1}{2}}}}^k \sqn{x^0- x^\star}_{\moL}.
\end{equation}
\end{theorem}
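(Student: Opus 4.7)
The plan is to follow the standard descent-lemma paradigm adapted to matrix smoothness, then convert the resulting per-step progress inequality into an average gradient-norm bound and a geometric iterate bound in the $\moL$-norm.

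First, I would apply Assumption~\ref{ass:matrix-smoothness} to the one-step update $x^{k+1} = x^k - \gamma \moB^k x^k$ to obtain
\begin{equation*}
f(x^{k+1}) \leq f(x^k) - \gamma \langle \nabla f(x^k), \moB^k x^k \rangle + \tfrac{\gamma^2}{2}\langle \moL \moB^k x^k, \moB^k x^k \rangle.
\end{equation*}
Because $\bb_i \equiv 0$ forces $\nabla f(x^k) = \moL x^k$ and $x^\star = 0$, I would then take conditional expectation given $x^k$ and symmetrize the cross term: since $x^{k\top}\moL \moB^k x^k$ is a scalar and both $\moL$ and $\moB^k$ are symmetric, it equals $\tfrac{1}{2}x^{k\top}(\moL \moB^k + \moB^k \moL) x^k$, whose expectation is exactly $x^{k\top}\moW x^k$. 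This produces
\begin{equation*}
\Exp{f(x^{k+1}) \mid x^k} \leq f(x^k) - \gamma\, x^{k\top}\moW x^k + \tfrac{\gamma^2}{2}\, x^{k\top}\Exp{\moB^k \moL \moB^k} x^k.
\end{equation*}

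Second, I would invoke the hypothesis $\Exp{\moB^k \moL \moB^k} \preceq \theta \moW$ together with the stepsize restriction $\gamma \leq 1/\theta$ to conclude $\Exp{f(x^{k+1}) \mid x^k} \leq f(x^k) - \tfrac{\gamma}{2}\,x^{k\top}\moW x^k$. Using invertibility of $\moL \succ 0$ and the identity $x^k = \moL^{-1}\nabla f(x^k)$, the quadratic form rewrites as $x^{k\top}\moW x^k = \sqN{\nabla f(x^k)}_{\moL^{-1}\moW\moL^{-1}}$. Taking total expectation, telescoping from $0$ to $K-1$, and dividing by $K$ delivers \eqref{eq:het_res}.

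For the iterate bound \eqref{eq:het_res_it}, I would expand $\sqn{x^{k+1}-x^\star}_{\moL}$ directly, take conditional expectation, and apply the same $\theta$-bound to get
\begin{equation*}
\Exp{\sqn{x^{k+1}-x^\star}_{\moL} \mid x^k} \leq \sqn{x^k-x^\star}_{\moL} - \gamma(2-\gamma\theta)\, x^{k\top}\moW x^k \leq \sqn{x^k-x^\star}_{\moL} - \gamma\, x^{k\top}\moW x^k.
\end{equation*}
Substituting $y = \moL^{1/2}x^k$ in the quadratic form yields the lower bound $x^{k\top}\moW x^k \geq \lambda_{\min}(\moL^{-1/2}\moW\moL^{-1/2})\,\sqn{x^k-x^\star}_{\moL}$, which extracts the contraction factor. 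Iterating the resulting one-step recursion and taking total expectation produces \eqref{eq:het_res_it}. The only subtle point is the symmetrization step, which requires both $\moL$ and each $\mC_i^k$ to be symmetric so that $\moB^k$ is too, and which together with $\moW \succeq 0$ ensures the sign of the linear term is correct; beyond that, the argument is bookkeeping, since the hypothesis $\Exp{\moB^k \moL \moB^k} \preceq \theta \moW$ plays the role normally filled by a separate bounded-variance or expected-smoothness bound on the gradient estimator.
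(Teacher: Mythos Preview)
Your proposal is correct and follows essentially the same argument as the paper's proof: apply matrix smoothness to the update, symmetrize the cross term to make $\moW$ appear, use the hypothesis \eqref{eq:het_exp_sep} with $\gamma\le 1/\theta$ to absorb the quadratic term, rewrite $x^{k\top}\moW x^k$ via $x^k=\moL^{-1}\nabla f(x^k)$ and telescope for \eqref{eq:het_res}; for \eqref{eq:het_res_it} expand $\sqn{x^{k+1}-x^\star}_{\moL}$, use $x^\star=0$, apply the same bound, and extract the contraction factor through the substitution $y=\moL^{1/2}x^k$. The only minor clarification is that symmetry of $\moB^k$ comes from each $\mL_i$ (not $\moL$) and each $\mC_i^k$ being symmetric, which is assumed in the setup.
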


This theorem establishes an $\cO(1/K)$ convergence rate with a constant step size up to a stationary point \eqref{eq:het_res} and linear convergence for the expected distance \eqref{eq:het_res_it} to the optimum $x^\star \eqdef \argmin f(x)$. 
Note that we employ weighted norms in our analysis, as the considered class of loss functions satisfies the matrix $\moL$-smoothness Assumption \ref{ass:matrix-smoothness}.
The use of standard Euclidean distance may result in loose bounds that do not recover correct rates for special cases like gradient descent.

It is important to highlight that the inequality \eqref{eq:het_exp_sep} may not hold (for any $\theta > 0$) in the general case as the matrix $\moW$ is not guaranteed to be positive (semi-)definite in the case of general sampling. 
The intuition behind this issue is that arbitrary sketches $\mC_i^k$ can result in the gradient estimator $g^k$, which is misaligned with the true gradient $\nabla f(x^k)$. Specifically, the inner product $\left\langle\nabla f(x^k), g^k \right\rangle$ can be negative, and there is no expected descent after one step.

Next, we give examples of samplings for which the inequality \eqref{eq:het_exp_sep} can be satisfied.

\textbf{1.} {\bf Identity.} Consider $\mC_i\equiv \mI$. Then $\moB^k = \moL$, $\moB^k\moL\moB^k = \moL^3, \moW = \moL^2 \succ 0$ and hence \eqref{eq:het_exp_sep} is satisfied for $\theta = \lambda_{\max}(\moL)$. So, \eqref{eq:het_res} says that if we choose $\gamma = \nicefrac{1}{\theta}$, then
\[
    \frac{1}{K} \sum \limits_{k=0}^{K-1} \sqN{\nabla f(x^k)}_{\mI }
    \leq 
    \frac{2 \lambda_{\max}(\moL)\br{f(x^0) - f(x^{K})}}{K},
\]
which exactly matches the rate of gradient descent in the non-convex setting.
As for convergence of the iterates, 
the rate in \eqref{eq:het_res_it} is $\nicefrac{\lambda_{\max}(\moL)}{\lambda_{\min}(\moL)}$ which corresponds to the precise gradient descent result for strongly convex functions.

\textbf{2.} {\bf Permutation.} Assume\footnote{This is mainly done to simplify the presentation. Results can be generalized to the case of $n \neq d$ in a similar manner as in \cite{szlendak2022permutation}, which can be found in the Appendix.} $n=d$ and the use of \texttt{Perm-1} (special case of Definition \ref{def:permutation}) sketch $\mC^k_i = n e_{\pi^k_i} e_{\pi^k_i}^\top$, where $\pi^k = (\pi^k_1, \dots, \pi^k_n)$ is a random permutation of $[n]$. Then 
$$\Exp{\moB^k} = \frac{1}{n} \sum \limits_{i=1}^n \Exp{\mC_i^k \mL_i \mC_i^k} = \frac{1}{n} \sum \limits_{i=1}^n n \Diag(\mL_i) = \sum_{i=1}^n \mD_i  = n \moD,$$
where $\moD \eqdef \frac{1}{n} \sum_{i=1}^n \mD_i, \mD_i \eqdef \Diag(\mL_i)$. Then inequality \eqref{eq:het_exp_sep} leads to
\begin{equation*}
   n \moD \moL \moD \preceq \frac{\theta}{2} \br{\moL \moD + \moD \moL},
\end{equation*}
which may not always hold as  $\moL \moD + \moD \moL$ is not guaranteed to be positive-definite---even in the case of $\moL \succ 0$. However, such a condition can be enforced via a slight modification of the permutation sketches, which is done in Section \ref{sec:precond_het_quad}. The limitation of such an approach is that the resulting compressors are no longer unbiased. 

\begin{remark}
Matrix $\moW$ in the case of permutation sketches may not be positive-definite. Consider the following example of a homogeneous ($\mL_i \equiv \mL$) two-dimensional problem:
\begin{equation*}
\mL =
    \left[\begin{array}{cccc}
        a & c \\ 
        c & b
    \end{array}\right].
\end{equation*}
Then 
\begin{equation*}
    \moW = \frac{1}{2} \sbr{\moL \moD + \moD \moL} =
    \left[\begin{array}{cccc}
    a^2 & c(a+b)/2 \\ 
    c(a+b)/2 & b^2
    \end{array}\right],
\end{equation*}
which for $c > \frac{2 a b}{a + b}$ has $\mathrm{det}(\moW) < 0$, and thus $\moW\nsucc 0$ according to Sylvester's criterion.
\end{remark}

Next, we focus on the particular case of \textbf{permutation} sketches, which are the most suitable for model partitioning according to Independent Subnetwork Training (IST). In the rest of this section, we discuss how the condition \eqref{eq:het_exp_sep} can be enforced via a specially designed preconditioning of the problem \eqref{eq:het_gen_quad_problem} or modification of the sketch mechanism \eqref{eq:permutation}.

\subsection{Homogeneous problem preconditioning}
\label{sec:precond_homo_quad}
To start, consider a homogeneous setting $f_i(x) = \frac{1}{2} x^\top \mL x$, so $\mL_i \equiv \mL$.
Now define $\mD = \Diag (\mL)$ -- a diagonal matrix with elements equal to the diagonal of $\mL$. Then, the problem can be converted to
\begin{equation} \label{eq:preconditioned}
    f_i(\mD^{-\frac{1}{2}}x) = \frac{1}{2} \br{\mD^{-\frac{1}{2}}x}^\top \mL \br{\mD^{-\frac{1}{2}} x} = \frac{1}{2} x^\top \underbrace{\br{\mD^{-\frac{1}{2}} \mL \mD^{-\frac{1}{2}}}}_{\mcL} x,
\end{equation}
which is equivalent to the original problem after changing the variables $\tilde{x} \eqdef \mD^{-\frac{1}{2}}x$. Note that $\mD = \Diag (\mL)$ is positive-definite as $\mL \succ 0$, and therefore $\mcL \succ 0$. Moreover, the preconditioned matrix $\mcL$ has all ones on the diagonal: $\Diag(\mcL) = \mI$. If we now combine \eqref{eq:preconditioned} with \texttt{Perm-1} sketches 
\begin{equation*}
   \Exp{\moB^k} = \Exp{\frac{1}{n} \sum_{i=1}^n \bfC_i \mcL \bfC_i} = n \Diag(\mcL) = n \mI.
\end{equation*}
Therefore, inequality \eqref{eq:het_exp_sep} takes the form $\tilde{\mW} = n \mcL \succeq \frac{1}{\theta} n^2 \mcL$, which holds for $\theta \geq n$, and the left-hand side of \eqref{eq:het_res} can be transformed (for an accurate comparison to standard methods) in the following way:
\begin{equation*}
    \sqN{\nabla f(x^k)}_{\mcL^{-1} \tilde{\mW} \mcL^{-1}} 
    \geq n \lambda_{\min} \br{\mcL^{-1}} \sqN{\nabla f(x^k)}_{\mI} =
    n \lambda_{\max} (\mcL) \sqN{\nabla f(x^k)}_{\mI}
\end{equation*}
The resulting convergence guarantee is
\begin{equation*}
    \frac{1}{K} \sum \limits_{k=0}^{K-1} \Exp{\sqN{\nabla f(x^k)}_{\mI}}
    \leq 
    \frac{2 \lambda_{\max} (\mcL) \br{f(x^0) - \Exp{f(x^K)}}}{K},
\end{equation*}
which matches classical gradient descent.

\subsection{Heterogeneous sketch preconditioning} \label{sec:precond_het_quad}
In contrast to the homogeneous case, the heterogeneous problem $f_i(x) = \frac{1}{2} x^\top \mL_i x$ cannot be so easily preconditioned by a simple change of variables $\tilde{x} \eqdef \mD^{-\frac{1}{2}}x$, as every client $i$ has its own matrix $\mL_i$. However, this problem can be fixed via the following modification of \texttt{Perm-1}, which scales the output according to the diagonal elements of the local smoothness matrix $\mL_i$:
\begin{equation} \label{eq:imp_perm}
    \tilde{\bfC}_i \eqdef \sqrt{n/\sbr{\mL_i}_{\pi_i, \pi_i}} e_{\pi_i} e_{\pi_i}^\top.
\end{equation}
In this case,  $\Exp{\tilde{\mC}_i \mL_i \tilde{\mC}_i} = \mI$,  $\Exp{\moB^k} = \mI$, and $\moW = \moL$.
Then inequality \eqref{eq:het_exp_sep} is satisfied for $\theta \geq 1$.
 
If one inputs these results into \eqref{eq:het_res}, such convergence guarantee can be obtained
\begin{equation*}
    \frac{1}{K} \sum \limits_{k=0}^{K-1} \Exp{\sqN{\nabla f(x^k)}_{\mI}}
    \leq 
    \frac{2 \lambda_{\max} (\moL) \br{f(x^0) - \Exp{f(x^K)}}}{K},
\end{equation*}
which matches the gradient descent result as well. Thus, we can conclude that heterogeneity does not bring such a fundamental challenge in this scenario. In addition, the method with \texttt{Perm-1} is significantly better in terms of computational and communication complexity, as it requires calculation of the local gradients with respect to much smaller submodels and transmits only sparse updates.

This construction also shows that for $\gamma = 1/\theta = 1$
\begin{equation*}
    \gamma \lambda_{\min} \br{\moL^{-\frac{1}{2}}\moW\moL^{-\frac{1}{2}}} = \lambda_{\min} \br{\moL^{-\frac{1}{2}} \moL\moL^{-\frac{1}{2}}} = 1,
\end{equation*}
which, after plugging into the bound for the iterates \eqref{eq:het_res_it}, shows that the method basically converges in one iteration. This observation indicates that sketch preconditioning can be extremely efficient, although it uses only the diagonal elements of matrices $\mL_i$.

Now that we understand that the method can perform very well in the special case of $\cb_i \equiv 0$, we can move on to a more complicated situation.

\section{Irreducible bias in the general case}

Now we look at the most general heterogeneous case with different matrices and linear terms 
$f_i(x) \equiv \frac{1}{2} x^\top \mL_i x - x^\top \bb_i.$
In this instance, the gradient estimator \eqref{eq:SGD_generic} takes the form
\begin{equation} \label{eq:het_gen_quad_est}
    g^k = \frac{1}{n} \sum \limits_{i=1}^n \mC_i^k \nabla f_i(\mC_i^k x^k) =  \frac{1}{n} \sum \limits_{i=1}^n \mC_i^k \br{\mL_i \mC_i^k x^k - \mathrm{b}_i} = \moB^k x^k - \oCb,
\end{equation}
where $\oCb = \frac{1}{n}\sum_{i=1}^n \mC_i^k \bb_i$. Herewith let us use a heterogeneous permutation sketch preconditioner \eqref{eq:imp_perm}, as in Section \ref{sec:precond_het_quad}.
Then $\Exp{\moB^k} = \mI$ and $\Exp{\oCb} = \frac{1}{\sqrt{n}} \cDb$, where $\cDb \eqdef \frac{1}{n} \sum_{i=1}^n \mD_i^{-\frac{1}{2}} \bb_i$.
Furthermore, the expected gradient estimator \eqref{eq:het_gen_quad_est} results in $\Exp{g^k} = x^k - \frac{1}{\sqrt{n}} \cDb$ and can be transformed in the following manner:
\begin{equation} \label{eq:het_gen_estimator}
    \Exp{g^k} 
    =
    \moL^{-1} \moL x^k \pm \moL^{-1} \ob - \frac{1}{\sqrt{n}} \cDb 
    = 
    \moL^{-1} \nabla f(x^k) + \underbrace{\moL^{-1}\ob - \frac{1}{\sqrt{n}}\cDb}_{h},
\end{equation}
which reflects the decomposition of the estimator into the optimally preconditioned true gradient and a bias, depending on the linear terms $\bb_i$.

\subsection{Bias of the method} \label{sec:het_method_bias}

Estimator \eqref{eq:het_gen_estimator} can be directly plugged (with proper conditioning) into the general SGD update \eqref{eq:SGD_generic}
\begin{equation} \label{eq:het_iterates}
    \Exp{x^{k+1}} = x^k - \gamma \Exp{g^k} = (1 - \gamma) x^k + \frac{\gamma}{\sqrt{n}} \cDb = \br{1 - \gamma}^{k+1} x^0 + \frac{\gamma}{\sqrt{n}} \cDb \sum \limits_{j=0}^k (1 - \gamma)^j.
\end{equation}

The resulting recursion \eqref{eq:het_iterates} is exact, and its asymptotic limit can be analyzed.
Thus, for constant $\gamma < 1$, by using the formula for the sum of the first $k$ terms of a geometric series, one gets
\begin{eqnarray*}
    \Exp{x^k} =
    \br{1 - \gamma}^k x^0 + \frac{1-(1 - \gamma)^k}{\sqrt{n}} \cDb \underset{k \to \infty}{\longrightarrow} \frac{1}{\sqrt{n}} \cDb,
\end{eqnarray*}
which shows that in the limit, the first initialization term (with $x^0$) vanishes while the second converges to $\frac{1}{\sqrt{n}} \cDb$. 
This reasoning shows that the method does not converge to the exact solution
$$\Exp{x^k} \to x^\infty \neq x^\star 
\in \argmin\limits_{x\in \bR^d} \cbr{\frac{1}{2} x^\top \moL x - x^\top \ob }, $$ 
which for the positive-definite $\moL$ can be defined as $x^\star = \moL^{-1} \ob$, while $x^\infty = \frac{1}{n\sqrt{n}} \sum_{i=1}^n \mD_i^{-\frac{1}{2}} \bb_i$.
So, in general, there is an unavoidable bias. However, in the limit case: $n = d \to \infty$, the bias diminishes.

\subsection{Generic convergence analysis} \label{sec:gen_convergence}
While the analysis in Section \ref{sec:het_method_bias} is precise, it does not allow us to compare the convergence of IST to standard optimization methods. Therefore, we also analyze the non-asymptotic behavior of the method to understand the convergence speed. Our result is formalized in the following theorem:

\begin{theorem} \label{thm:het_gen_gradient}
Consider the method \eqref{eq:SGD_generic} with the estimator \eqref{eq:het_gen_quad_est} for the quadratic problem \eqref{eq:het_gen_quad_problem} with the positive-definite matrix $\moL \succ 0$. Assume that for every $\mD_i \eqdef \Diag(\mL_i)$ matrices $\mD_i^{-\frac{1}{2}}$ exist, scaled permutation sketches \eqref{eq:imp_perm}
are used, and heterogeneity is bounded as $\Exp{\sqN{g^k - \Exp{g^k}}_{\moL}} \leq \sigma^2$. Then, for the step size chosen as follows:
\begin{equation*}
    0 < \gamma \leq \gamma_{c, \beta} \eqdef \frac{1/2 - \beta}{\beta + 1/2},
\end{equation*}
where $\gamma_{c, \beta} \in (0, 1]$ for $\beta \in (0, 1/2)$, the iterates satisfy
\begin{equation} \label{eq:het_gen_grad_res}
    \frac{1}{K} \sum \limits_{k=0}^{K-1} \Exp{\sqN{\nabla f(x^k)}_{\moL^{-1}}}
    \leq 
    \frac{2\br{f(x^0) - \Exp{f(x^K)}}}{\gamma K} + \br{2\beta^{-1}\br{1 - \gamma} + \gamma} \sqn{h}_{\moL} + \gamma \sigma^2,
\end{equation}
where $\moL = \frac{1}{n} \sum_{i=1}^n \mL_i, h = \moL^{-1}\ob - \frac{1}{n^{3/2}} \sum_{i=1}^n \mD_i^{-\frac{1}{2}} \bb_i$ and $\ob = \frac{1}{n} \sum_{i=1}^n \bb_i$.
\end{theorem}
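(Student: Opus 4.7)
I would prove this via a matrix-smoothness descent argument combined with Young's inequality, mirroring the structure of the proof of Theorem~\ref{thm:het_quad} but carefully tracking the irreducible additive bias $h$ induced by the linear terms $\bb_i$. The only structural facts used are the explicit decomposition of $\Exp{g^k}$ established in \eqref{eq:het_gen_estimator} and the hypothesis that the estimator has bounded $\moL$-variance $\sigma^2$.

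The starting point is Assumption~\ref{ass:matrix-smoothness} applied with increment $-\gamma g^k$, which gives
\begin{equation*}
    f(x^{k+1}) \leq f(x^k) - \gamma \<\nabla f(x^k), g^k> + \frac{\gamma^2}{2} \sqN{g^k}_{\moL}.
\end{equation*}
Taking conditional expectation, substituting $\Exp{g^k} = \moL^{-1}\nabla f(x^k) + h$ from \eqref{eq:het_gen_estimator}, and using the bias-variance identity $\Exp{\sqN{g^k}_{\moL}} = \sqN{\Exp{g^k}}_{\moL} + \Exp{\sqN{g^k - \Exp{g^k}}_{\moL}}$ together with the variance bound $\sigma^2$, I would expand $\sqN{\moL^{-1}\nabla f(x^k) + h}_{\moL}$ via the identity $\<\moL(\moL^{-1}u),v> = \<u,v>$ and collect terms. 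The crucial bookkeeping step is to aggregate the two cross terms of opposite sign arising from $-\gamma\<\nabla f(x^k), \Exp{g^k}>$ and from the expansion of $(\gamma^2/2)\sqN{\Exp{g^k}}_{\moL}$ into a single contribution proportional to $-\gamma(1-\gamma)\<\nabla f(x^k), h>$; otherwise the clean $(1-\gamma)$ factor that appears in the final error term is lost. The result is
\begin{equation*}
    \Exp{f(x^{k+1}) \mid x^k} \leq f(x^k) - \gamma\br{1 - \gamma/2}\sqN{\nabla f(x^k)}_{\moL^{-1}} - \gamma(1-\gamma)\<\nabla f(x^k), h> + \frac{\gamma^2}{2}\sqN{h}_{\moL} + \frac{\gamma^2}{2}\sigma^2.
\end{equation*}

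Next, I would split the indefinite cross term via a weighted Young inequality $|\<u,v>| \leq \frac{c}{2}\sqN{u}_{\moL^{-1}} + \frac{1}{2c}\sqN{v}_{\moL}$ with the free constant $c = c(\beta)$ calibrated so that the resulting coefficient of $\sqN{h}_{\moL}$ matches the advertised form $\frac{\gamma}{2}(2\beta^{-1}(1-\gamma) + \gamma)$. The stated step-size threshold $\gamma \leq \gamma_{c,\beta} = (1/2 - \beta)/(\beta + 1/2)$ is designed so that the residual coefficient of $\sqN{\nabla f(x^k)}_{\moL^{-1}}$ stays at most $-\gamma/2$. After this substitution, taking total expectation, telescoping the resulting one-step inequality over $k = 0, \dots, K-1$, and dividing by $\gamma K/2$ delivers exactly \eqref{eq:het_gen_grad_res}.

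The main obstacle is the simultaneous calibration of the Young parameter $c$ so that both (i) the $\sqN{h}_{\moL}$ coefficient comes out in the exact advertised form $2\beta^{-1}(1-\gamma) + \gamma$ and (ii) the admissibility condition on $\gamma$ collapses to the clean threshold $\gamma_{c,\beta}$. A secondary check is that $\gamma_{c,\beta} \in (0, 1]$ whenever $\beta \in (0, 1/2)$, which follows directly from the explicit formula and guarantees a nontrivial step-size regime.
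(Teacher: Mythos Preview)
Your proposal is correct and follows essentially the same route as the paper: matrix smoothness, substitution of $\Exp{g^k}=\moL^{-1}\nabla f(x^k)+h$, bias--variance split, collection of the cross terms into $-\gamma(1-\gamma)\langle\nabla f(x^k),h\rangle$, a weighted Young/Fenchel--Young bound on that cross term, and telescoping. The only cosmetic difference is that the paper parameterizes the Young step as $\langle u,v\rangle\le \beta\sqn{u}_{\moL^{-1}}+\tfrac{1}{4}\beta^{-1}\sqn{v}_{\moL}$ (its Lemma on Fenchel--Young) and then crudely bounds $\tfrac14\le 1$, whereas you leave the constant free; either way one lands on the coefficient condition $1-\gamma/2-\beta(1-\gamma)\ge 1/2$ that produces the stated step-size threshold.
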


Note that the derived convergence upper bound has a neighborhood proportional to the bias of the gradient estimator $h$ and level of heterogeneity $\sigma^2$. Some of these terms with factor $\gamma$ can be eliminated by decreasing the learning rate (e.g., $\sim 1/\sqrt{K}$). However, such a strategy does not diminish the term with a multiplier $2 \beta^{-1} \br{1 - \gamma}$, making the neighborhood irreducible. Moreover, this term can be eliminated for $\gamma = 1$, which also minimizes the first term that decreases as $1/K$. However, this step size choice maximizes the terms with factor $\gamma$.
Thus, there exists an inherent trade-off between convergence speed and the size of the neighborhood.

In addition, convergence to the stationary point is measured by the weighted $\moL^{-1}$ squared norm of the gradient. At the same time, the neighborhood term depends on the weighted by $\moL$ norm of $h$. This fine-grained decoupling is achieved by carefully applying the Fenchel-Young inequality and provides a tighter characterization of the convergence compared to using standard Euclidean distances.

\paragraph{Homogeneous case.} In this scenario, every worker has access to all data $f_i(x) \equiv \frac{1}{2} x^\top \mL x - x^\top \bb$. Then diagonal preconditioning of the problem can be used, as in the previous Section \ref{sec:precond_homo_quad}. This results in a gradient
$\nabla f(x) = \mcL x - \cb$ for $\mcL = \mD^{-\frac{1}{2}} \mL \mD^{-\frac{1}{2}}$ and $\cb = \mD^{-\frac{1}{2}} \bb$. If this expression is further combined with a permutation sketch (scaled by $1/\sqrt{n}$)
$\mC_i' \eqdef \sqrt{n} e_{\pi_i} e^{\top}_{\pi_i}$, 
the resulting gradient estimator is:
\begin{equation} \label{eq:hom_gen_estimator}
    g^k = x^k - \frac{1}{\sqrt{n}} \cb 
    = 
    \mcL^{-1} \nabla f(x^k) + \tilde{h},
\end{equation}
for $\tilde{h} = \mcL^{-1}\cb - \frac{1}{\sqrt{n}}\cb$.
In this case, the heterogeneity term $\sigma^2$ from the upper bound \eqref{eq:het_gen_grad_res} disappears as $\Exp{\sqN{g^k - \Exp{g^k}}_{\moL}} = 0$, which significantly decreases the neighborhood size. However, the bias term depending on $\tilde{h}$ still remains, as the method does not converge to the exact solution 
$x^k \to x^\infty \neq x^\star = \mcL^{-1} \cb$ for positive-definite $\mcL$. Nevertheless the method's fixed point $x^\infty = \cb / \sqrt{n}$ and solution $x^\star$ can coincide when $\mcL^{-1} \cb = \frac{1}{\sqrt{n}} \cb$, which means that $\cb$ is the right eigenvector of matrix $\mcL^{-1}$ with eigenvalue $\frac{1}{\sqrt{n}}$.

Let us contrast the obtained result \eqref{eq:het_gen_grad_res} with the non-convex rate of SGD \cite{khaled2022better} with constant step size $\gamma$ for $L$-smooth and lower-bounded $f$
\begin{equation} \label{eq:SGD_rate}
    \min \limits_{0 \leq k \leq K-1} \Exp{\sqN{\nabla f(x^k)}}
    \leq 
    \frac{6\br{f(x^0) - \inf f}}{\gamma K} + \gamma L C,
\end{equation}
where constant $C$ depends, for example, on the variance of the stochastic gradient estimator.
Observe that the first term in the compared upper bounds \eqref{eq:SGD_rate} and \eqref{eq:het_gen_grad_res} is almost identical and decreases with speed $1/K$. However, unlike \eqref{eq:het_gen_grad_res}, the neighborhood for SGD can be completely eliminated by reducing the step size $\gamma$. This highlights a fundamental difference between our results and unbiased methods.
The intuition behind this issue is that for SGD-type methods like compressed gradient descent
\begin{equation*}
    x^{k+1} = x^k - \gamma \cC(\nabla f(x^k))
\end{equation*}
the gradient estimate is unbiased and enjoys the property that variance 
\begin{equation*}
    \Exp{ \sqn{\cC(\nabla f(x^k)) - \nabla f(x^k)}} \leq \omega \sqn{\nabla f(x^k)}
\end{equation*}
goes down to zero as the method progresses because $\nabla f(x^k) \to \nabla f(x^\star) = 0$ in the unconstrained case.
In addition, any stationary point $x^\star$ ceases to be a fixed point of the iterative procedure as
\begin{equation*}
    x^\star \neq x^\star - \gamma \nabla f(\cC(x^\star)),
\end{equation*}
in the general case, unlike for compressed gradient descent
with both biased and unbiased compressors $\cC$. Thus, even if the method---computing the gradient with a sparse model---is initialized from the \emph{solution} after one gradient step, the method may get away from the optimum.

\subsection{Comparison to previous works}

\paragraph{Independent Subnetwork Training \cite{yuan2022distributed}.} There are several improvements over the previous works that tried to theoretically analyze the convergence of distributed IST. 

The first difference is that our results allow for an almost arbitrary level of model sparsification, i.e., will work for any $\omega \geq 0$ as permutation sketches can be viewed as a special case of compression operators \eqref{def:unbiased_compressor}. This represents a significant improvement over the work of \citet{yuan2022distributed}, which demands\footnote{$\mu$ refers to a constant from the Polyak-Łojasiewicz (or strong convexity) condition. In case of a quadratic problem with positive-definite matrix $\mA$  constant $\mu$ equals to $\lambda_{\min} (\mA)$} $\omega \lesssim \nicefrac{\mu^2}{L^2}$. Such a requirement is very restrictive as the condition number $L/\mu$ of the loss function $f$ is typically very large for any non-trivial optimization problem. Thus, the sparsifier's \eqref{eq:rand-k} variance $\omega = d/q-1$ has to be very close to 0 and $q \approx d$. Thus, the previous theory allows almost no compression (sparsification) because it is based on the analysis of gradient descent with compressed iterates \cite{khaled2019gradient}.

The second distinction is that the original IST work \cite{yuan2022distributed} considered a single node setting, and thus their convergence bounds did not capture the effect of heterogeneity, which we believe is of crucial importance for distributed settings \cite{chraibi2019distributed, shulgin2022shifted}.
Moreover, the original work considers the Lipschitz continuity of the loss function $f$, which is not satisfied for a simple quadratic model. A more detailed comparison, including additional assumptions on the gradient estimator made by \citet{yuan2022distributed}, is presented in the Appendix \ref{sec:app_comparison}.

\paragraph{FL with Model Pruning.}
In a recent work, \citet{zhou2022convergence} made an attempt to analyze a variant of the FedAvg algorithm with sparse local initialization and compressed gradient training (pruned local models). They considered a case of $L$-smooth loss and a sparsification operator satisfying a similar condition to \eqref{def:unbiased_compressor}. However, they also assumed that the squared norm of the stochastic gradient is uniformly bounded \eqref{eq:bounded_gradient}, which is \say{pathological} \cite{khaled2020tighter}---especially in the case of local methods---as it does not allow the analysis to capture the very important effect of heterogeneity and can result in vacuous bounds. 

In the Appendix \ref{sec:app_comparison}, we present some limitations of other relevant previous approaches to training with compressed models: too restrictive assumptions on the algorithm \cite{mohtashami2022masked} or non-applicability in our problem setting \cite{chayti2022optimization}. In addition, we discuss the differences between IST and 3D parallelism \cite{shoeybi2019megatron}.

\section{Experiments} \label{sec:experiments}

To empirically validate our theoretical framework and its implications, we focus on a carefully controlled setting that satisfies the assumptions of our work. Specifically, we consider a quadratic problem defined in \eqref{eq:het_gen_quad_problem}. As a reminder, the local loss function is defined as $$f_i(x) = \frac{1}{2} x^\top \mL_i x - x^\top \bb_i,$$ where $\mL_i = \mB_i^\top \mB_i$. Entries of the matrices $\mB_i \in \bR^{d \times d}$, vectors $\bb_i \in \Rd$, and initialization $x^0\in\Rd$ are generated from a standard Gaussian distribution $\mathcal{N}(0, 1)$.

\begin{figure}
\centering
\begin{subfigure}{0.49\textwidth}
    \includegraphics[width=\linewidth]{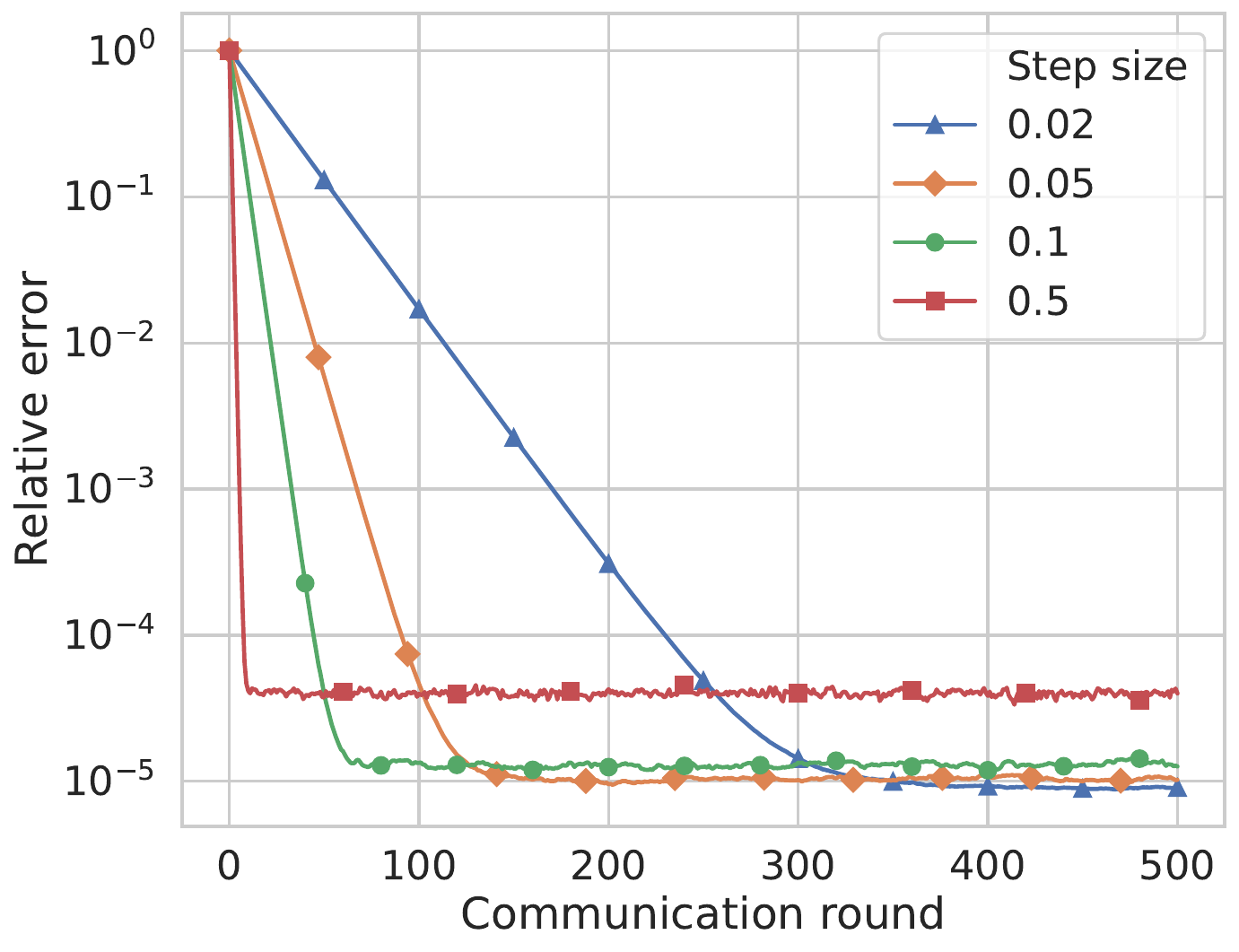}
    \caption{Function convergence for heterogeneous case.}
    \label{fig:hetero_convergence}
\end{subfigure}
\begin{subfigure}{0.5\textwidth}
    \includegraphics[width=\linewidth]{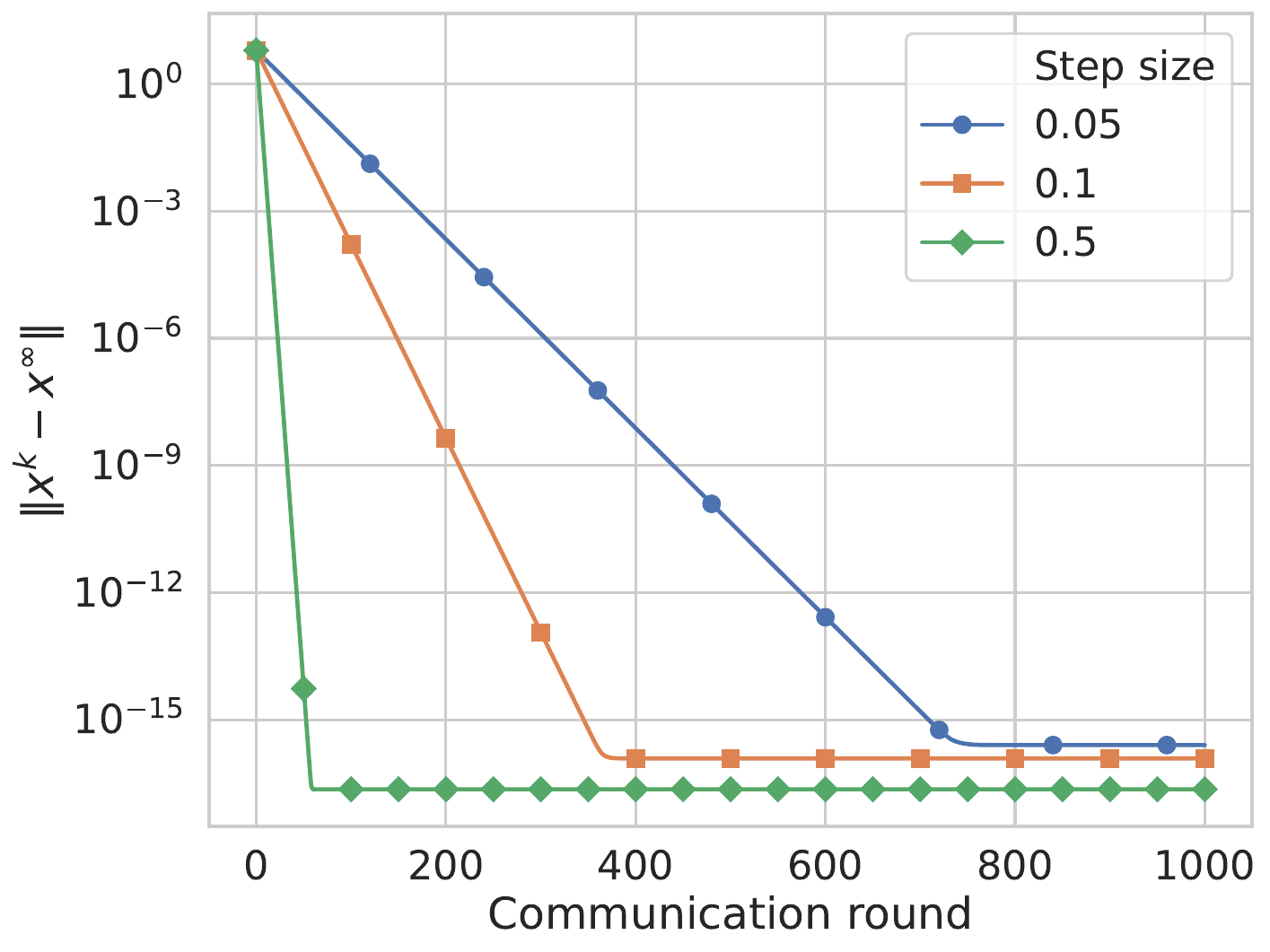}
    \caption{Iterates convergence for homogeneous case.}
    \label{fig:homo_stationary}
\end{subfigure}
\caption{Performance of simplified IST on quadratic problem for varying step size values.}
\label{fig:varying_ss}
\end{figure}

\textbf{Heterogeneous setting.} In Figure \ref{fig:hetero_convergence}, we present the performance of the simplified Independent Subnetwork Training (IST) algorithm (update \eqref{eq:SGD_generic} with estimator \eqref{eq:het_gen_quad_est}) for a heterogeneous problem. We fix the dimension $d$ to 1000 and the number of computing nodes $n$ to 10. We evaluate the logarithm of a relative functional error $\log (f(x^k) - f(x^\star)) / (f(x^0) - f(x^\star))$, while the horizontal axis denotes the number of communication rounds required to achieve a certain error tolerance. According to our theory \eqref{eq:gen_func_res}, the method converges to a neighborhood of the solution, which depends on the chosen step size. Specifically, a larger step size allows for faster convergence but results in a larger neighborhood.

\textbf{Homogeneous setting.} In Figure \ref{fig:homo_stationary}, we demonstrate the convergence of the iterates $x^k$ for a homogeneous problem with $d = n = 50$.
The results are in close agreement with our theoretical predictions for the estimator \eqref{eq:hom_gen_estimator}. We observe that the distance to the method's expected fixed point $x^\infty = \cb / \sqrt{n}$ decreases linearly for different step size values. This confirms that IST may not converge to the optimal solution $x^\star = \mcL^{-1} \cb$ of the original problem \eqref{eq:het_gen_quad_problem} in general (no interpolation) cases. In addition, there are no visible oscillations in comparison to the heterogeneous case.

\subsection{Neural network results}
\begin{figure}
\centering
\begin{subfigure}{0.485\textwidth}
    \includegraphics[width=\linewidth]{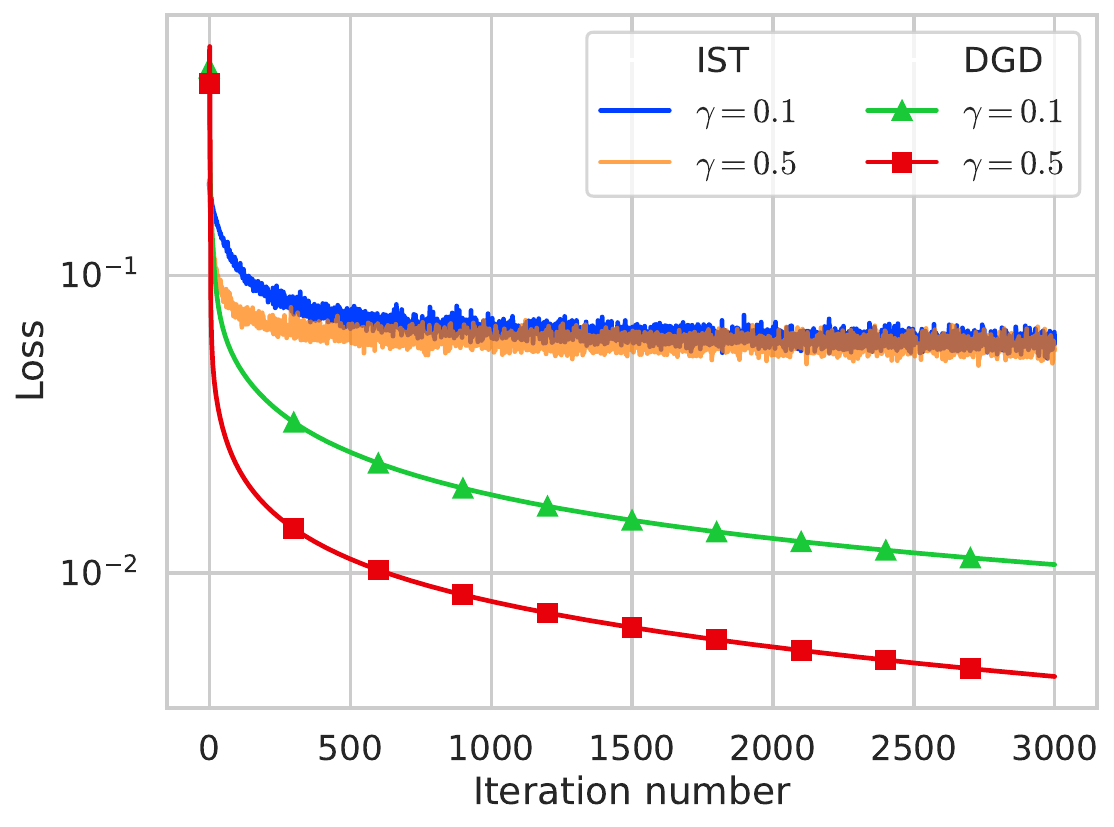}%
    \caption{Comparison of IST and Distributed GD (DGD).}
    \label{fig:ist_vs_dgd}%
\end{subfigure}%
\begin{subfigure}{0.51\textwidth}
    \includegraphics[width=\linewidth]{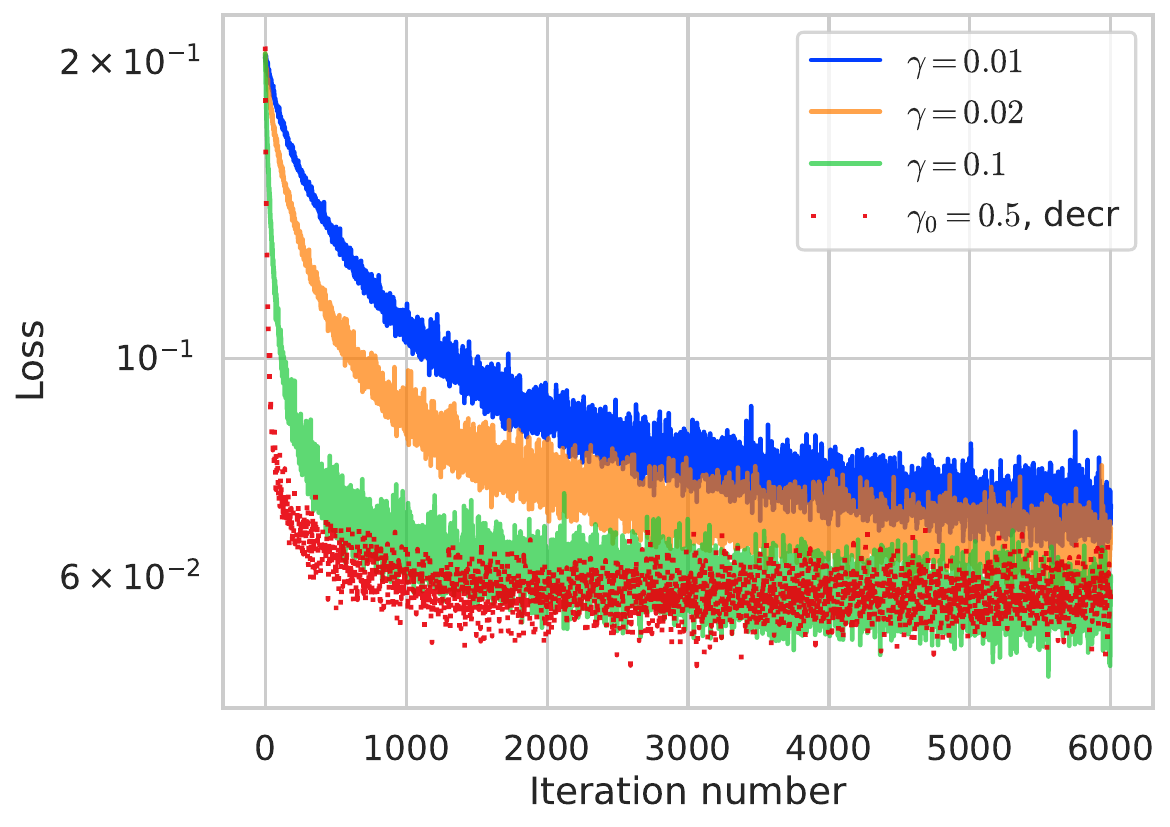}%
    \caption{Performance of IST with different step sizes.}
    \label{fig:ist_decr}%
\end{subfigure}
\caption{Experimental study of IST on a neural network problem.}
\end{figure}

We closely follow the experimental setup of \citet{liao2022on} and use a one-hidden-layer neural network with ReLU activations. For completeness, we repeat some of the details here.
ResNet-50 model \cite{he2016deep} pre-trained on ImageNet is used as a feature extractor and concatenated with two fully connected layers. The resulting model is then trained on the CIFAR-10 \cite{krizhevsky2009cifar} dataset. We take the outputs of the re-trained ResNet-50 as the input features resulting in $d = 2048$, and the logit outputs of the combined model are used as the labels. 

The goal of the experiment is to study the optimization performance of the IST method described in Section \ref{sec:setup}. Namely, we consider Algorithm \ref{alg:IST} with 1) $\mC_i$ chosen as \texttt{Perm-q} \eqref{eq:permutation} for IST and 2) $\mC_i = \mI$ for Distributed Gradient Descent (DGD). Both methods are implemented across $n=10$ nodes, employing constant step sizes $\gamma$, and one local step per communication round.

Figure \ref{fig:ist_vs_dgd} shows the logarithm of the average loss of submodels after every iteration. 
The main observation is that IST's performance stagnates (starts oscillating) at some point, unlike DGD's. Namely, the studied method converges to the neighborhood whose size is basically the same for different step size values. This phenomenon distinguishes the method from standard SGD.
Moreover, larger $\gamma$ speeds up DGD and allows reaching the error floor (oscillation level) faster for IST. These observations agree well with our Theorems \ref{thm:het_gen_gradient} and \ref{thm:gen_function} as the convergence upper bound \eqref{eq:het_gen_grad_res} has an irreducible term proportional to the bias norm $\sqN{h}_{\moL}$ of the gradient estimator. This term can not be eliminated completely by decreasing the step size $\gamma$, unlike SGD \eqref{eq:SGD_rate}.

In Figure \ref{fig:ist_decr}, we take a closer look at the training loss during IST optimization. For this problem, the situation differs from Figure \ref{fig:hetero_convergence} as for even smaller step size values ($\gamma \in \{0.01, 0.02\}$), the method converges to a higher error floor. Interestingly, if $\gamma$ is decreased by 10 every 1000 iterations, the method's performance ({\color{red} red} dotted curve) almost does not change. This can be explained by the second term $\br{1-\gamma}\beta^{-1} \sqN{h}_{\moL}$ from Theorem \ref{thm:het_gen_gradient} convergence bound \eqref{eq:het_gen_grad_res}, which increases for smaller $\gamma$.
The observed effect distinguishes IST from the typical training situation in deep learning.

\section{Conclusions and Future Work}

In this study, we introduced a novel approach to understanding training with combined model and data parallelism for a quadratic model. Our framework sheds light on distributed submodel optimization, which reveals the advantages and limitations of Independent Subnetwork Training (IST). Moreover, we accurately characterized the behavior of the considered method in both homogeneous and heterogeneous scenarios without imposing restrictive assumptions on the gradient estimators.

In future research, it would be valuable to explore extensions of our findings to settings that are closer to scenarios, such as cross-device federated learning. This could involve investigating partial participation support, leveraging local training benefits, and ensuring robustness against stragglers. 

It would be interesting to generalize our analysis to non-quadratic scenarios without relying on pathological assumptions. 
This work shows a somewhat negative result regarding worst-case IST performance for standard Empirical Risk Minimization problem \eqref{eq:general_problem}. However, IST was recently shown to be effective at solving an alternative optimization problem formulation by \citet{demidovich2023mast}.
Another potential promising research direction is algorithmic modifications of the original IST to solve the fundamental problems highlighted in this work and acceleration of training.

\section*{Acknowledgments}
We would like to thank anonymous reviewers, Avetik Karagulyan (KAUST) and Andrea Devlin (KAUST) for their helpful comments and suggestions to improve the  manuscript.

\bibliography{references}
\bibliographystyle{plainnat}

\newpage
\appendix

% \part*{Supplementary Material}

\tableofcontents

\newpage

\begin{figure*}[h]
\centering
\includegraphics[width=0.7\linewidth]{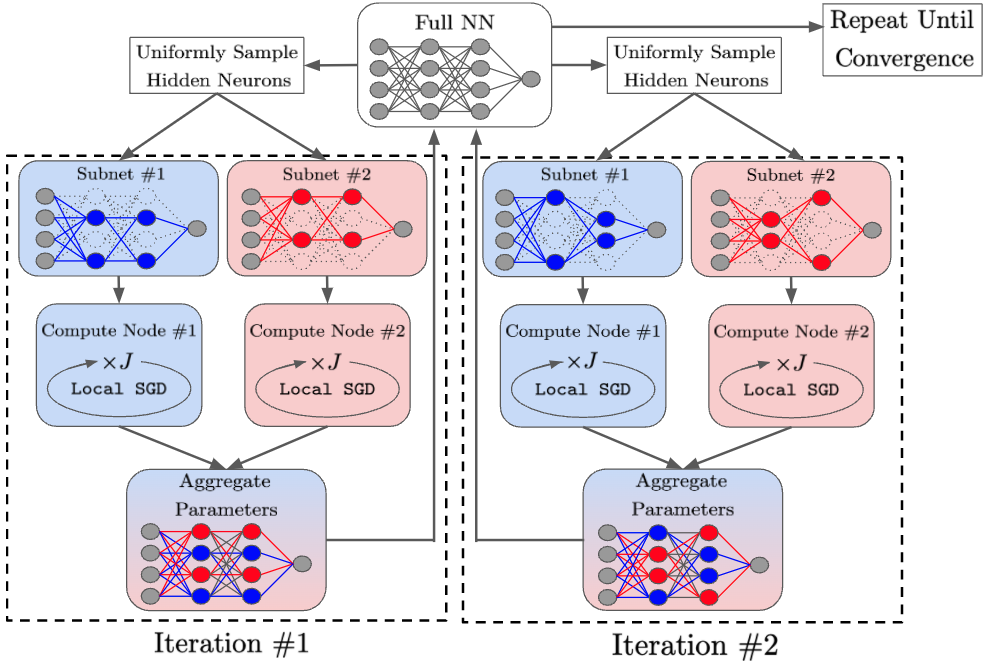}%
\caption{Schematic depiction of a Neural Network trained with IST across two nodes. Source: \cite{yuan2022distributed}.}
\label{fig:IST_schematic}
\end{figure*}

\section{Basic and auxiliary facts}

$\mL$-matrix smoothness:
\begin{equation} \label{eq:L-matrix-smooth_}
   f(x + h) \leq f(x) + \left\langle\nabla f(x), h \right\rangle + \frac{1}{2} \left\langle \mL h, h \right\rangle, \qquad \forall x, h \in \Rd.
\end{equation}

\paragraph{Basic Inequalities.}
For all vectors $a, b\in \Rd$ and random vector $X\in \Rd$:
\begin{equation} \label{eq:dot_product}
    2\<a, b> = \sqn{a} + \sqn{b} - \sqn{a - b},
\end{equation}
\begin{equation} \label{eq:triangle}
    \sqn{a + b} \leq (1 + \beta) \sqn{a} + (1 + \beta^{-1}) \sqn{b}, \text{ for } \beta > 0
\end{equation}
\begin{equation} \label{eq:variance_decomposition}
    \E\sqn{X - a} = \E\sqn{X - \E X} + \sqn{\E X - a}.
\end{equation}
For a set of $n \geq 1$ vectors $a_{1}, \ldots, a_{n} \in \bR^d$ it holds
\begin{equation} \label{eq:average_inequality}
    \sqN{\frac{1}{n} \sum_{i=1}^{n} a_i} \leq \frac{1}{n} \sum_{i=1}^{n} \sqn{a_i}.
\end{equation}

\begin{lemma}[Fenchel–Young inequality] \label{lemma:fenchel_young}
For any function $f$ and its convex conjugate $f^*$, Fenchel's inequality (also known as the Fenchel–Young inequality) holds for every $x, y \in \bR^d$
\begin{equation*}
    \left\langle x, y \right\rangle \leq f(x) + f^*(y).
\end{equation*}
The proof follows from the definition of conjugate: $f^*(y) \eqdef \sup_{x'} \cbr{\left\langle y, x' \right\rangle - f(x')} \geq \left\langle y, x \right\rangle - f(x)$.
\end{lemma}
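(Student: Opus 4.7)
The plan is to derive the inequality directly from the defining property of the convex conjugate, essentially following the one-line hint already appended to the lemma statement. By definition, for any function $f: \bR^d \to \bR \cup \{+\infty\}$ the convex conjugate is $f^*(y) \eqdef \sup_{x' \in \bR^d} \cbr{\left\langle y, x' \right\rangle - f(x')}$, so $f^*(y)$ is, by construction, at least as large as the quantity $\left\langle y, x' \right\rangle - f(x')$ evaluated at any single point $x'$.

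First, I would exploit this by choosing the specific admissible point $x' = x$ and invoking the trivial fact that a supremum dominates each element of the set over which it is taken. This yields the pointwise bound $f^*(y) \geq \left\langle y, x \right\rangle - f(x)$. No hypothesis on $f$ is used beyond the fact that the conjugate $f^*$ is well-defined with values in $\bR \cup \{+\infty\}$.

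Next, I would rearrange by adding $f(x)$ to both sides and using the symmetry of the Euclidean inner product $\left\langle y, x \right\rangle = \left\langle x, y \right\rangle$. This produces $\left\langle x, y \right\rangle \leq f(x) + f^*(y)$, which is exactly the Fenchel--Young inequality as stated. A brief remark would confirm that the inequality is vacuously true whenever $f(x) = +\infty$ or $f^*(y) = +\infty$, so the statement holds for every $x, y \in \bR^d$ without further assumptions.

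There is no substantive obstacle in this argument: the proof is effectively a one-line manipulation of the defining supremum, and no convexity, differentiability, or lower-semicontinuity hypothesis on $f$ is required for the inequality itself. The only mild care needed is in handling the extended-real-valued conventions, which I would dispatch in a single closing sentence.
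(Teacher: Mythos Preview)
Your proposal is correct and follows exactly the same approach as the paper: both derive the inequality directly from the definition of the conjugate by noting that the supremum over $x'$ dominates the value at $x'=x$, then rearranging. The paper compresses this into a single line, but the content is identical.
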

In the case of a quadratic function $f(x) = \beta \sqn{x}_{\mL}$, we can compute $f^*(y) = \frac{1}{4} \beta^{-1} \sqn{y}_{\mL^{-1}}$. Thus 
\begin{equation} \label{eq:fy-ineq_quad}
    \left\langle x, y \right\rangle \leq \beta \sqn{x}_{\mL} + \frac{1}{4} \beta^{-1} \sqn{y}_{\mL^{-1}}.
\end{equation}

\section{Proofs}

\subsection{Permutation sketch computations}

All derivations in this section are performed for the $n = d$ case.

\paragraph{Classical Permutation Sketching.}
\texttt{Perm-1}: $\mC_i = n e_{\pi_i} e_{\pi_i}^\top$, where $\pi = (\pi_1, \dots, \pi_n)$ is a random permutation of $[n]$.
For the homogeneous problem $\mL_i \equiv \mL$:

\begin{equation*}
    \Exp{\moB^k} = \Exp{\frac{1}{n} \sum_{i=1}^n \bfC_i \mL \bfC_i} = n \Diag(\mL) 
\end{equation*}
Then 
\begin{equation*}
    2 \moW = \Exp{\mL \moB^k + \moB^k \mL} = n \br{\mL\Diag(\mL) + \Diag(\mL) \mL}
\end{equation*}
and
\begin{equation*}
    \Exp{\moB^k\mL\moB^k} = n^2 \Diag(\mL) \mL \Diag(\mL).
\end{equation*}
By repeating basically the same calculations for $\mC'_i = \sqrt{n} e_{\pi_i} e_{\pi_i}^\top$ we have that
\begin{equation*}
    \Exp{\moB^k} = \Exp{\frac{1}{n} \sum_{i=1}^n \mC'_i \mL \mC'_i} = \Diag(\mL),
\end{equation*}
and 
$\Exp{\moB^k\mL\moB^k} = \Diag(\mL) \mL \Diag(\mL)$, $2 \moW = \Exp{\mL \moB^k + \moB^k \mL} = \mL\Diag(\mL) + \Diag(\mL) \mL$.

\subsubsection{Heterogeneous sketch preconditioning.} \label{sec:het_sketch_precond}
We recall the following modification of \texttt{Perm-1}:
\begin{equation*}
    \tilde{\bfC}_i \eqdef \sqrt{n/\sbr{\mL_i}_{\pi_i, \pi_i}} e_{\pi_i} e_{\pi_i}^\top.
\end{equation*}
Then 
\begin{equation*}
    \Exp{\tilde{\mC}_i \mL_i \tilde{\mC}_i} = 
    \Exp{n [\mL_i]^{-1}_{\pi_i, \pi_i} e_{\pi_i} e_{\pi_i}^\top \mL_i e_{\pi_i} e_{\pi_i}^\top} = 
    \frac{1}{n} \sum_{j=1}^n n e_{j} \mathbf{I}_{j, j} e_{j}^\top = 
    \mI.
\end{equation*}
and 
\begin{eqnarray*}
    \Exp{\moB^k} &=&
    \Exp{\frac{1}{n} \sum_{i=1}^n \tilde{\mC}_i \mL_i \tilde{\mC}_i} 
    \\&=& 
    \frac{1}{n} \sum_{i=1}^n \Exp{n[\mL_i]^{-1}_{\pi_i, \pi_i} e_{\pi_i} e_{\pi_i}^\top \mL_i e_{\pi_i} e_{\pi_i}^\top}
    \\&=& 
    \frac{1}{n} \sum_{i=1}^n \frac{1}{n} \sum_{j=1}^n n[\mL_i]^{-1}_{j, j} e_{j} [\mL_i]_{j,j} e_{j}^\top
    \\&=& 
    \frac{1}{n} \sum_{i=1}^n \sum_{j=1}^n  e_{j} e_{j}^\top
    \\&=&
    \mI.
\end{eqnarray*}
Thus $\moW = \frac{1}{2} \Exp{\moL \moB^k + \moB^k \moL} = \moL$.
On the left hand side of inequality \eqref{eq:het_exp_sep}, we have 
\begin{eqnarray*}
    \Exp{\moB^k \moL \moB^k} 
    &=&
    \Exp{\frac{1}{n} \sum_{i=1}^n \tilde{\mC}_i \mL_i \tilde{\mC}_i \moL \frac{1}{n} \sum_{i=j}^n \tilde{\mC}_j \mL_j \tilde{\mC}_j} 
    \\&=& 
    \frac{1}{n^2} \sum_{i,j=1}^n \Exp{\tilde{\mC}_i \mL_i \tilde{\mC}_i \moL \tilde{\mC}_j \mL_j \tilde{\mC}_j} 
    \\&=& 
    \sum_{i,j=1}^n e_i e_i^\top \moL e_j e_j^\top
    \\&=& 
    \mI \moL \mI
    \\&=&
    \moL.
\end{eqnarray*}

\subsection{Interpolation case: proof of Theorem \ref{thm:het_quad}}

In the quadratic interpolation regime, the linear term is zero
$f_i(x) = \frac{1}{2} x^\top \mL_i x$,
and the gradient estimator has the form 
\begin{equation*}
   g^k = \frac{1}{n} \sum \limits_{i=1}^n \mC_i^k \nabla f_i(\mC_i^k x^k) =  \frac{1}{n} \sum \limits_{i=1}^n \mC_i^k \mL_i \mC_i^k x^k = \moB^k x^k.
\end{equation*}

\begin{proof}
First, we prove the \textbf{stationary point} convergence result \eqref{eq:het_res}.

Using the $\moL$-smoothness of function $f$, we get
\begin{eqnarray*}
    f(x^{k+1}) \overset{\eqref{eq:SGD_generic} }{=} f(x^k - \gamma g^k) 
    & \overset{\eqref{eq:L-matrix-smooth}}{\leq} &
    f(x^k) - \left\langle\nabla f(x^k), \gamma g^k \right\rangle + \frac{\gamma^2}{2} \sqN{g^k}_{\moL} \\
    &\overset{\eqref{eq:het_estimator}}{=} &
    f(x^k) - \gamma \left\langle \moL x^k, \moB^k x^k \right\rangle + \frac{\gamma^2}{2} \sqN{\moB^k x^k}_{\moL} \\
    &= & f(x^k) - \gamma (x^k)^\top \moL \moB^k x^k  + \frac{\gamma^2}{2} 
 (x^k)^\top \moB^k \moL \moB^k x^k.
\end{eqnarray*}
After applying conditional expectation, using its linearity, and the fact that 
$$x^\top \mA x = \frac{1}{2} x^\top \br{\mA + \mA^\top}x$$ we get
\begin{eqnarray*} 
    \Exp{ f(x^{k+1}) \;|\; x^k}
    & \leq & f(x^k) - \gamma (x^k)^\top \Exp{\moL\moB^k} x^k  + \frac{\gamma^2}{2} 
    (x^k)^\top \Exp{\moB^k \moL \moB^k } x^k\\
    &=&f(x^k) - \gamma (x^k)^\top \moW x^k  + \frac{\gamma^2}{2} 
    (x^k)^\top \Exp{\moB^k \moL \moB^k } x^k\\
    &=&f(x^k) - \gamma ( \nabla f(x^k))^\top  \moL^{-1}\moW \moL^{-1} \nabla f(x^k)  
    \\&& \qquad + \frac{\gamma^2}{2} 
    (\nabla f(x^k))^\top \moL^{-1}\Exp{\moB^k \moL \moB^k} \moL^{-1}  \nabla f(x^k) \\
    &\overset{\eqref{eq:het_exp_sep}}{\leq}&f(x^k) - \gamma \norm{\nabla f(x^k)}^2_{\moL^{-1}\moW \moL^{-1}}  + \frac{\theta \gamma^2}{2} \norm{\nabla f(x^k)}^2_{\moL^{-1}\moW \moL^{-1}} \\
    &=&
    f(x^k) - \gamma \br{1 - \nicefrac{\theta\gamma}{2}} \norm{\nabla f(x^k)}^2_{\moL^{-1}\moW \moL^{-1}}
    \\ & \leq &
    f(x^k) - \frac{\gamma}{2} \sqn{\nabla f(x^k)}_{\moL^{-1}
    \moW \moL^{-1}},
\end{eqnarray*} 
where the last inequality holds for the stepsize $\gamma \leq \frac{1}{\theta}$.

Rearranging gives
\begin{equation*}
    \sqN{\nabla f(x^k)}_{\moL^{-1} \moW \moL^{-1}} \leq 
    \frac{2}{\gamma} \br{f(x^k) - \Exp{ f(x^{k+1}) \;|\; x^k}},
\end{equation*}
which after averaging gives the desired result
\begin{equation*}
    \frac{1}{K} \sum_{k=0}^{K-1} \Exp{\sqN{\nabla f(x^k)}_{\moL^{-1} \moW \moL^{-1}}}
    \leq 
    \frac{2}{\gamma K} \sum_{k=0}^{K-1} (f(x^{k}) - \Exp{f(x^{k+1})}) 
    =
    \frac{2 \br{f(x^0) - \Exp{f(x^K)}}}{\gamma K}.
\end{equation*}

Now we show the result for the \textbf{iterates convergence} \eqref{eq:het_res_it}.

Expectation conditioned on $x^k$:
\begin{eqnarray*} 
\Exp{\sqn{x^{k+1} - x^\star}_{\moL}} &=& 
\Exp{\sqn{x^k - \gamma g^k - x^\star}_{\moL}}
\\&=& 
\sqn{x^k - x^\star}_{\moL} - 2 \gamma \left\langle x^k - x^\star, \Exp{\moL\moB^k} (x^k - x^\star) \right\rangle 
\\&&\qquad + \gamma^2 \left\langle \Exp{\moB^k \moL \moB^k}(x^k - x^\star), x^k - x^\star \right\rangle
\\ & \overset{x^\star=0}{=} &
\sqn{x^k - x^\star}_{\moL} - 2 \gamma \left\langle x^k - x^\star, \moW (x^k - x^\star) \right\rangle
\\&&\qquad  + \gamma^2 \left\langle x^k - x^\star, \Exp{\moB^k \moL \moB^k} (x^k - x^\star) \right\rangle
\\ & \overset{\eqref{eq:het_exp_sep}}{\leq} &
\sqn{x^k - x^\star}_{\moL} - 2 \gamma \left\langle x^k - x^\star, \moW (x^k - x^\star) \right\rangle + \theta\gamma^2 \left\langle x^k - x^\star, \moW (x^k - x^\star) \right\rangle
\\ & = &
\sqn{x^k - x^\star}_{\moL} - 2\gamma \br{1 - \theta\gamma/2} \sqN{\moL^{\frac{1}{2}} (x^k - x^\star)}_{\moL^{-\frac{1}{2}}\moW\moL^{-\frac{1}{2}}}
\\ & \overset{\gamma \leq \nicefrac{1}{\theta}}{\leq} &
\sqn{x^k - x^\star}_{\moL} - \gamma \sqN{\moL^{\frac{1}{2}}(x^k - x^\star)}_{\moL^{-\frac{1}{2}}\moW\moL^{-\frac{1}{2}}}
\\ & \leq &
\sqn{x^k - x^\star}_{\moL} - \gamma \lambda_{\min} \br{\moL^{-\frac{1}{2}}\moW\moL^{-\frac{1}{2}}} \sqN{\moL^{\frac{1}{2}}(x^k - x^\star)}
\\ & = &
\br{1 - \gamma \lambda_{\min} \br{\moL^{-\frac{1}{2}}\moW\moL^{-\frac{1}{2}}}} \sqn{x^k - x^\star}_{\moL}.
\end{eqnarray*} 
After unrolling the recursion we obtain the convergence result
\begin{equation*}
    \Exp{\sqn{x^{k+1} - x^\star}_{\moL}} \leq 
    \br{1 - \gamma \lambda_{\min} \br{\moL^{-\frac{1}{2}}\moW\moL^{-\frac{1}{2}}}}^{k+1} \sqn{x^0 - x^\star}_{\moL}.
\end{equation*}

\end{proof}

\subsection{Non-zero solution}

As a reminder, in the most general case, the problem has the form
\begin{equation*}
   f(x) = \frac{1}{n} \sum \limits_{i=1}^n f_i(x), \qquad f_i(x) \equiv \frac{1}{2} x^\top \mL_i x - x^\top \bb_i.
\end{equation*}

with the gradient estimator
\begin{equation*}
   g^k = \frac{1}{n} \sum \limits_{i=1}^n \mC_i^k \nabla f_i(\mC_i^k x^k) =  \frac{1}{n} \sum \limits_{i=1}^n \mC_i^k \br{\mL_i \mC_i^k x^k - \mathrm{b}_i} = \moB^k x^k - \frac{1}{n} \sum \limits_{i=1}^n \mC_i^k \bb_i.
\end{equation*}

\paragraph{General calculations for estimator \eqref{eq:het_gen_quad_est}.}
In the heterogeneous case, the following sketch preconditioner is used
\begin{equation*}
    \tilde{\bfC}_i \eqdef \sqrt{n/\sbr{\mL_i}_{\pi_i, \pi_i}} e_{\pi_i} e_{\pi_i}^\top.
\end{equation*}
Then $\Exp{\moB^k} = \mI$ (calculation was done as in Section \ref{sec:het_sketch_precond}) and
\begin{eqnarray*}
    \Exp{\oCb} 
    &=& 
    \frac{1}{n}\sum_{i=1}^n \Exp{\tilde{\mC}_i^k \bb_i} 
    \\&=& 
    \frac{1}{n}\sum_{i=1}^n \Exp{\sqrt{n} [\mL_i]^{-\frac{1}{2}}_{\pi_i, \pi_i} e_{\pi_i} e_{\pi_i}^\top \bb_i} \\&=&
    \frac{1}{n} \sum_{i=1}^n \frac{1}{n} \sum_{j=1}^n \sqrt{n} [\mL_i]^{-\frac{1}{2}}_{j,j} e_{j} [\bb_i]_j 
    \\&=&
    \frac{1}{n} \sum_{i=1}^n \frac{1}{n} \sqrt{n} \mD_i^{-\frac{1}{2}} \bb_i
    \\&=&
    \frac{1}{\sqrt{n}} \frac{1}{n} \sum_{i=1}^n \mD_i^{-\frac{1}{2}} \bb_i
    \\&=&
    \frac{1}{\sqrt{n}} \underbrace{\oDb}_{\cDb}
\end{eqnarray*}

\subsubsection{Generic convergence analysis for heterogeneous case: proof of Theorem \ref{thm:het_gen_gradient}.}

Here we formulate and further prove a more general version of Theorem \ref{thm:het_gen_gradient}, which is obtained as a special case of the next result for $c=1/2$.

\begin{theorem}
Consider the method \eqref{eq:SGD_generic} with estimator \eqref{eq:het_gen_quad_est} for a quadratic problem \eqref{eq:het_gen_quad_problem} with positive-definite matrix $\moL \succ 0$. Then, if for every $\mD_i \eqdef \Diag(\mL_i)$ matrices $\mD_i^{-\frac{1}{2}}$ exist, scaled permutation sketches $\bfC_i  \eqdef \sqrt{n} [\mL_i^{-\frac{1}{2}}]_{\pi_i, \pi_i} e_{\pi_i} e_{\pi_i}^\top$ are used and heterogeneity is bounded as $\Exp{\sqN{g^k - \Exp{g^k}}_{\moL}} \leq \sigma^2$. Then, the step size is chosen as
\begin{equation*}
    0 < \gamma \leq \gamma_{c, \beta} \eqdef \frac{1 - c - \beta}{\beta + 1/2},
\end{equation*}
where $\gamma_{c, \beta} \in (0, 1]$ for $\beta + c < 1$, the iterates satisfy
\begin{equation*}
    \frac{1}{K} \sum_{k=0}^{K-1} \Exp{\sqN{\nabla f(x^k)}_{\moL^{-1}}}
    \leq 
    \frac{f(x^0) - \Exp{f(x^K)}}{c\gamma K} + \br{\frac{1 - \gamma}{c\beta} + \frac{\gamma}{2c}} \sqn{h}_{\moL} + \frac{\gamma}{2c} \sigma^2.
\end{equation*}
where $\moL = \frac{1}{n} \sum_{i=1}^n \mL_i, h = \moL^{-1}\ob - \frac{1}{\sqrt{n}} \frac{1}{n} \sum_{i=1}^n \mD_i^{-\frac{1}{2}} \bb_i$ and $\ob = \frac{1}{n} \sum_{i=1}^n \bb_i$.
\end{theorem}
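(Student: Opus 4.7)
The plan is to combine $\moL$-smoothness with the explicit conditional-mean decomposition $\Exp{g^k \mid x^k} = \moL^{-1}\nabla f(x^k) + h$ derived in Section~\ref{sec:het_method_bias}, and then close the argument with a single application of the weighted Fenchel--Young inequality to the bias-induced cross term.

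I would start by applying matrix smoothness \eqref{eq:L-matrix-smooth} along the displacement $-\gamma g^k$, obtaining $f(x^{k+1}) \leq f(x^k) - \gamma\langle \nabla f(x^k), g^k\rangle + \tfrac{\gamma^2}{2}\sqN{g^k}_{\moL}$, and take expectation conditional on $x^k$. The linear term collapses to $-\gamma\sqN{\nabla f(x^k)}_{\moL^{-1}} - \gamma \langle \nabla f(x^k), h\rangle$ via the decomposition of $\Exp{g^k \mid x^k}$. For the quadratic term I would invoke the variance decomposition \eqref{eq:variance_decomposition} in the $\moL$-weighted norm, bound the zero-mean part by the heterogeneity constant $\sigma^2$, and expand $\sqN{\moL^{-1}\nabla f(x^k) + h}_{\moL} = \sqN{\nabla f(x^k)}_{\moL^{-1}} + 2\langle \nabla f(x^k), h\rangle + \sqN{h}_{\moL}$. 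Collecting these ingredients yields the one-step inequality
\[
\Exp{f(x^{k+1}) \mid x^k} \leq f(x^k) - \gamma\bigl(1-\tfrac{\gamma}{2}\bigr)\sqN{\nabla f(x^k)}_{\moL^{-1}} - \gamma(1-\gamma)\langle \nabla f(x^k), h\rangle + \tfrac{\gamma^2}{2}\sqN{h}_{\moL} + \tfrac{\gamma^2}{2}\sigma^2.
\]

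The hard part is the cross term $-\gamma(1-\gamma)\langle \nabla f(x^k), h\rangle$: because the bias $h$ does not vanish at the optimum, this term is precisely what forces the irreducible neighborhood in the final bound. I would tame it with the weighted Fenchel--Young inequality \eqref{eq:fy-ineq_quad} applied with the matrix $\moL^{-1}$ and a tuning parameter tied to the free constant $\beta$, chosen so that (i) the absorbed contribution to $\sqN{\nabla f(x^k)}_{\moL^{-1}}$ combines with $-\gamma(1 - \gamma/2)$ to yield precisely $-c\gamma$, and (ii) the complementary coefficient on $\sqN{h}_{\moL}$ turns into $\gamma(1-\gamma)/\beta$. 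These two targets over-determine the Young parameter, and the step-size threshold $\gamma \leq \gamma_{c,\beta} = (1-c-\beta)/(\beta + 1/2)$ is exactly the inequality that reconciles them; the elementary check that $\gamma_{c,\beta} \in (0, 1]$ whenever $\beta + c < 1$ then falls out by inspecting the resulting quotient.

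To finish, I would rearrange the one-step inequality into
\[
c\gamma\, \sqN{\nabla f(x^k)}_{\moL^{-1}} \leq f(x^k) - \Exp{f(x^{k+1}) \mid x^k} + \Bigl(\tfrac{\gamma(1-\gamma)}{\beta} + \tfrac{\gamma^2}{2}\Bigr)\sqN{h}_{\moL} + \tfrac{\gamma^2}{2}\sigma^2,
\]
take full expectation via the tower rule, sum over $k = 0, \ldots, K-1$ so that the function-value differences telescope to $f(x^0) - \Exp{f(x^K)}$, and divide through by $c\gamma K$ to obtain the claimed bound. The stated Theorem~\ref{thm:het_gen_gradient} is then recovered as the $c = 1/2$ specialization, and I expect the only non-trivial adjustment needed for general $c$ to be the careful bookkeeping of the $c$-dependent coefficients in the Young step.
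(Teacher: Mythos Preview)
Your proposal is correct and follows essentially the same route as the paper: $\moL$-smoothness, the decomposition $\Exp{g^k \mid x^k} = \moL^{-1}\nabla f(x^k) + h$, the $\moL$-weighted variance split bounded by $\sigma^2$, a Fenchel--Young inequality \eqref{eq:fy-ineq_quad} with parameter $\beta$ on the cross term $-\gamma(1-\gamma)\langle \nabla f(x^k), h\rangle$, and then telescoping. The only cosmetic difference is that the paper applies Fenchel--Young directly with parameter $\beta$ (obtaining $\tfrac{1}{4\beta}$ on $\sqN{h}_{\moL}$ and then crudely bounding $\tfrac14 < 1$) and imposes $1 - \gamma/2 - \beta(1-\gamma) \geq c$ as an inequality, whereas you phrase it as two ``targets'' reconciled by the step-size constraint; the resulting arithmetic is identical.
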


\begin{proof}
By using $\mL$-smoothness
\begin{eqnarray}
\Exp{f(x^{k+1}) \;|\; x^k}
& \overset{\eqref{eq:L-matrix-smooth}}{\leq} & 
f(x^k) - \gamma \left\langle \nabla f(x^k), \Exp{g^k} \right\rangle + \frac{\gamma^2}{2} \Exp{\sqn{g^k}_{\moL}} 
\notag \\& \overset{\eqref{eq:het_gen_estimator}, \eqref{eq:variance_decomposition}}{=} & 
f(x^k) - \gamma \left\langle \nabla f(x^k), \moL^{-1} \nabla f(x^k) + h \right\rangle
\notag \\&& \qquad + 
\frac{\gamma^2}{2} \br{\sqN{\Exp{g^k}}_{\moL} + \Exp{\sqN{g^k - \Exp{g^k}}_{\moL}}} \notag
\\& \overset{\eqref{eq:het_gen_estimator}}{=} &
f(x^k) - \gamma \br{\left\langle \nabla f(x^k), \moL^{-1} \nabla f(x^k) \right\rangle + \left\langle \nabla f(x^k), h \right\rangle} \notag
\\ && \qquad + 
\frac{\gamma^2}{2} \br{\sqN{\moL^{-1} \nabla f(x^k) + h}_{\moL} + \Exp{\sqN{g^k - \Exp{g^k}}_{\moL}}} \notag
\\ &\overset{\eqref{eq:dot_product}}{=}&
f(x^k) - \gamma \br{\sqN{\nabla f(x^k)}_{\moL^{-1}} + \left\langle \nabla f(x^k), h \right\rangle} + \frac{\gamma^2}{2} \Exp{\sqN{g^k - \Exp{g^k}}_{\moL}} \notag
\\ && \qquad + 
\frac{\gamma^2}{2} \br{\sqN{\nabla f(x^k)}_{\moL^{-1}} + 2\left\langle \nabla f(x^k), h \right\rangle + \sqn{h}_{\moL}} \notag
\\ &\leq&
f(x^k) - \gamma \br{1 - \nicefrac{\gamma}{2}} \sqN{\nabla f(x^k)}_{\moL^{-1}} + \frac{\gamma^2}{2} \sigma^2 \notag
\\&& \qquad - \gamma \br{1 - \gamma} \left\langle \nabla f(x^k), h \right\rangle + \frac{\gamma^2}{2} \sqn{h}_{\moL}, \notag 
\end{eqnarray}
where the last inequality follows from the grouping of similar terms and bounded heterogeneity
\begin{eqnarray} \label{eq:hetero_bound}
    \Exp{\sqN{g^k - \Exp{g^k}}_{\moL}} &=& \notag
    \Exp{\sqN{g^k - \br{\moL^{-1} \nabla f(x^k) + h}}_{\moL}} 
    \\&=&
    \Exp{\sqN{\moB^k x^k - \oCb - \br{x^k - \frac{1}{\sqrt{n}}\cDb}}_{\moL}} \leq \sigma^2.
\end{eqnarray}

Next, using a Fenchel-Young inequality \eqref{eq:fy-ineq_quad} for  $\left\langle \nabla f(x^k), -h \right\rangle$ and $1 - \gamma \geq 0$
\begin{eqnarray*}
\Exp{f(x^{k+1}) \;|\; x^k}
&\leq&
f(x^k) - \gamma \br{1 - \nicefrac{\gamma}{2}} \sqN{\nabla f(x^k)}_{\moL^{-1}} + \frac{\gamma^2}{2} \br{\sqn{h}_{\moL} +  \sigma^2} \notag 
\\&& \qquad + \gamma \br{1 - \gamma} \sbr{\beta \sqn{\nabla f(x^k)}_{\mcL^{-1}} + 0.25 \beta^{-1}\sqn{h}_{\mcL}} \notag
\\ &\leq&
f(x^k) - \gamma \br{1 - \nicefrac{\gamma}{2} - \beta \br{1 - \gamma}} \sqN{\nabla f(x^k)}_{\moL^{-1}} \notag
\\&& \qquad +
\gamma \cbr{\br{\beta^{-1} \br{1 - \gamma} + \frac{\gamma}{2}}\sqn{h}_{\moL} + \frac{\gamma}{2} \sigma^2},
\end{eqnarray*}
where in the last inequality we grouped similar terms and used the fact that $0.25 < 1$.

Now to guarantee that $1 - \nicefrac{\gamma}{2} - \beta(1 - \gamma) \geq c >0$, we choose the step size using
\begin{equation*}
    0 < \gamma \leq \gamma_{c, \beta} \eqdef \frac{1 - c - \beta}{\beta + 1/2},
\end{equation*}
where $\gamma_{c, \beta} > 0$ for $\beta + c < 1$. This means that $\beta$ can not arbitrarily grow to diminish $\beta^{-1}$. \\
Then, after standard manipulations and unrolling the recursion
\begin{equation*}
     \gamma c \sqN{\nabla f(x^k)}_{\moL^{-1}} \leq f(x^k) - \Exp{f(x^{k+1}) \;|\; x^k} + \gamma \br{\beta^{-1}\br{1 - \gamma} + \gamma/2} \sqn{h}_{\moL} + \frac{\gamma^2}{2} \sigma^2
\end{equation*}
we obtain
\begin{equation*}
    \frac{c}{K} \sum_{k=0}^{K-1} \Exp{\sqN{\nabla f(x^k)}_{\moL^{-1}}}
    \leq 
    \frac{f(x^0) - \Exp{f(x^K)}}{\gamma K} + \br{\beta^{-1}\br{1 - \gamma} + \gamma/2} \sqn{h}_{\moL} + \frac{\gamma}{2} \sigma^2.
\end{equation*}

\end{proof}

\subsubsection{Homogeneous case}
The main difference compared to the result in the previous subsection is that the gradient estimator expression \eqref{eq:hom_gen_estimator} holds deterministically (without expectation $\mathbb{E}$). That is why $g^k = \Exp{g^k}$ and heterogeneity term $\sigma^2$ equals to 0.

We provide the full statement and proof for the homogeneous result discussed in \ref{sec:gen_convergence}.
\begin{theorem} \label{thm:gen_gradient}
Consider the method \eqref{eq:SGD_generic} with estimator \eqref{eq:hom_gen_estimator} for a homogeneous quadratic problem \eqref{eq:het_gen_quad_problem} with positive-definite matrix $\mL_i \equiv \mL \succ 0$. Then if exists $\mD^{-\frac{1}{2}}$ for $\mD \eqdef \Diag(\mL)$, scaled permutation sketch $\mC'_i = \sqrt{n} e_{\pi_i} e_{\pi_i}^\top$ is used and the step size is chosen as
\begin{equation*}
    0 < \gamma \leq \gamma_{c, \beta} \eqdef \frac{1 - c - \beta}{\beta + 1/2},
\end{equation*}
where $\gamma_{c, \beta} > 0$ for $\beta + c < 1$. Then the iterates satisfy
\begin{equation} \label{eq:gen_grad_res}
    \frac{1}{K} \sum_{k=0}^{K-1} \Exp{\sqN{\nabla f(x^k)}_{\mcL^{-1}}}
    \leq 
    \frac{f(x^0) - \Exp{f(x^K)}}{c \gamma K} + \br{\frac{1 - \gamma}{c\beta} + \frac{\gamma}{2c}} \sqn{h}_{\mcL},
\end{equation}
where $\mcL = \mD^{-\frac{1}{2}} \mL \mD^{-\frac{1}{2}}, h = \mcL^{-1}\cb - \frac{1}{\sqrt{n}}\cb$ and $\cb = \mD^{-\frac{1}{2}} \bb$.
\end{theorem}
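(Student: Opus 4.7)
The proof will essentially mirror the heterogeneous argument given earlier, specialized to the deterministic estimator \eqref{eq:hom_gen_estimator}. The key simplification is that because $g^k = \mcL^{-1} \nabla f(x^k) + \tilde{h}$ holds pointwise (no randomness left after conditioning on $x^k$), the variance term $\Exp{\sqN{g^k - \Exp{g^k}}_{\mcL}}$ vanishes, so $\sigma^2 = 0$ drops out of the bound. Apart from this, the chain of inequalities is the same, only with $\moL$ replaced by $\mcL$ and $h$ by $\tilde h$.

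The plan is to start from matrix smoothness \eqref{eq:L-matrix-smooth} applied at $x^{k+1} = x^k - \gamma g^k$:
\begin{equation*}
    f(x^{k+1}) \leq f(x^k) - \gamma \langle \nabla f(x^k), g^k\rangle + \tfrac{\gamma^2}{2} \sqn{g^k}_{\mcL}.
\end{equation*}
Substituting $g^k = \mcL^{-1} \nabla f(x^k) + \tilde h$ and expanding the squared norm produces three kinds of terms: $\sqn{\nabla f(x^k)}_{\mcL^{-1}}$, cross terms $\langle \nabla f(x^k), \tilde h\rangle$, and $\sqn{\tilde h}_{\mcL}$. Grouping them yields an inequality of the form
\begin{equation*}
    f(x^{k+1}) \leq f(x^k) - \gamma(1 - \tfrac{\gamma}{2}) \sqn{\nabla f(x^k)}_{\mcL^{-1}} - \gamma(1 - \gamma)\langle \nabla f(x^k), \tilde h\rangle + \tfrac{\gamma^2}{2}\sqn{\tilde h}_{\mcL}.
\end{equation*}

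Next I would handle the sign-indefinite cross term via the quadratic Fenchel--Young inequality \eqref{eq:fy-ineq_quad} applied with weights $\mcL^{-1}$ and $\mcL$, splitting it into $\beta(1-\gamma)\sqn{\nabla f(x^k)}_{\mcL^{-1}}$ and $\tfrac{(1-\gamma)}{4\beta}\sqn{\tilde h}_{\mcL}$ (the coefficient $1/4$ is then bounded by $1$ to match the stated constant). Collecting coefficients gives
\begin{equation*}
    f(x^{k+1}) \leq f(x^k) - \gamma\bigl(1 - \tfrac{\gamma}{2} - \beta(1-\gamma)\bigr) \sqn{\nabla f(x^k)}_{\mcL^{-1}} + \gamma\bigl(\beta^{-1}(1-\gamma) + \tfrac{\gamma}{2}\bigr)\sqn{\tilde h}_{\mcL}.
\end{equation*}
Enforcing $1 - \tfrac{\gamma}{2} - \beta(1-\gamma) \geq c$ by straightforward algebra gives exactly the bound $\gamma \leq \gamma_{c,\beta} = (1 - c - \beta)/(\beta + 1/2)$, provided $\beta + c < 1$ so the numerator is positive.

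Finally I rearrange to isolate $c\gamma \sqn{\nabla f(x^k)}_{\mcL^{-1}}$, take total expectation (trivial here since no randomness is involved beyond the recursion itself), sum from $k=0$ to $K-1$, telescope the $f(x^k) - \Exp{f(x^{k+1})}$ differences, and divide by $K$ to obtain \eqref{eq:gen_grad_res}. I expect no real obstacle: the only subtlety is the careful bookkeeping in the Fenchel--Young step so that the multiplicative constants on $\sqn{h}_{\mcL}$ match the statement, and a clean verification that $\gamma_{c,\beta} \in (0,1]$ under the stated condition on $\beta + c$. The result then follows as a direct specialization of the heterogeneous theorem with $\sigma^2 = 0$, $\moL \mapsto \mcL$, $h \mapsto \tilde h$.
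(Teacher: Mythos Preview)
Your proposal is correct and follows essentially the same approach as the paper: start from $\mcL$-smoothness, substitute the deterministic decomposition $g^k = \mcL^{-1}\nabla f(x^k) + \tilde h$, expand, bound the cross term with the quadratic Fenchel--Young inequality (also relaxing the factor $1/4$ to $1$), enforce $1-\gamma/2-\beta(1-\gamma)\ge c$ to obtain the step-size constraint, and telescope. The paper likewise notes that the only change from the heterogeneous proof is that $g^k=\Exp{g^k}$, so the $\sigma^2$ term disappears.
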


\begin{proof}
By using $\mL$-smoothness
\begin{eqnarray*}
\Exp{f(x^k - \gamma g^k) \;|\; x^k} 
    & \overset{\eqref{eq:L-matrix-smooth}}{\leq} &
    f(x^k) - \left\langle\nabla f(x^k), \gamma \Exp{g^k} \right\rangle + \frac{\gamma^2}{2} \Exp{\sqN{g^k}_{\mcL}} \\
& \leq & f(x^k) - \gamma \left\langle \nabla f(x^k), \mcL^{-1} \nabla f(x^k) + h \right\rangle + \frac{\gamma^2}{2} \sqN{\mcL^{-1} \nabla f(x^k) + h}_{\mcL}
 \\ &\overset{\eqref{eq:dot_product}}{=}&
 f(x^k) - \gamma \br{\left\langle \nabla f(x^k), \mcL^{-1} \nabla f(x^k) \right\rangle + \left\langle \nabla f(x^k), h \right\rangle} \notag
 \\ && \qquad + 
 \frac{\gamma^2}{2} \br{\sqN{\nabla f(x^k)}_{\mcL^{-1}} + 2\left\langle \nabla f(x^k), h \right\rangle + \sqn{h}_{\mcL}} \notag
 \\ &=&
 f(x^k) - \gamma \br{1 - \nicefrac{\gamma}{2}} \sqN{\nabla f(x^k)}_{\mcL^{-1}}
+ \frac{\gamma^2}{2} \sqn{h}_{\mcL} - \gamma \br{1 - \gamma} \left\langle \nabla f(x^k), h \right\rangle \notag
\end{eqnarray*}
Next by using a Fenchel-Young inequality \eqref{eq:fy-ineq_quad} for  $\left\langle \nabla f(x^k), -h \right\rangle$ and $1 - \gamma \geq 0$

\begin{eqnarray*}
\Exp{f(x^{k+1}) \;|\; x^k}
& \leq & 
f(x^k) - \gamma \br{1 - \nicefrac{\gamma}{2}} \sqN{\nabla f(x^k)}_{\mcL^{-1}}  + \frac{\gamma^2}{2} \sqn{h}_{\mcL}
\\ && \qquad + \gamma \br{1 - \gamma} \sbr{\beta \sqn{\nabla f(x^k)}_{\mcL^{-1}} + 0.25\beta^{-1}\sqn{h}_{\mcL}}
\\ & = & 
f(x^k) - \gamma \br{1 - \nicefrac{\gamma}{2} - \beta(1 - \gamma)} \sqN{\nabla f(x^k)}_{\mcL^{-1}}
\\ && \qquad + \gamma \br{\beta^{-1}\br{1 - \gamma} + \gamma/2} \sqn{h}_{\mcL}.
\end{eqnarray*}
Now to guarantee that $1 - \nicefrac{\gamma}{2} - \beta(1 - \gamma) \geq c > 0$ we choose the step size as
\begin{equation*}
    0 < \gamma \leq \gamma_{c, \beta} \eqdef \frac{1 - c - \beta}{\beta + 1/2},
\end{equation*}
where $\gamma_{c, \beta} \geq 0$ for $\beta + c < 1$.
\\
Then after standard manipulations and unrolling the recursion
\begin{equation} \label{eq:descent_recursion}
     \gamma c \sqN{\nabla f(x^k)}_{\mcL^{-1}} \leq f(x^k) - \Exp{f(x^{k+1}) \;|\; x^k} + \gamma \br{\beta^{-1}\br{1 - \gamma} + \gamma/2} \sqn{h}_{\mcL}
\end{equation}
we obtain the formulated result
\begin{equation*}
    \frac{c}{K} \sum_{k=0}^{K-1} \Exp{\sqN{\nabla f(x^k)}_{\mcL^{-1}}}
    \leq 
    \frac{f(x^0) - \Exp{f(x^K)}}{\gamma K} + \br{\beta^{-1}\br{1 - \gamma} + \gamma/2} \sqn{h}_{\mcL}.
\end{equation*}
\end{proof}

\begin{remark}
\textbf{1)} The first term in the convergence upper bound \eqref{eq:gen_grad_res} is minimized by maximizing product $c\cdot\gamma$, which motivates to choose $c>0$ and $\gamma \leq 1$ as large as possible. Although due to the constraint on the step size (and $\beta > 0$)
\begin{equation*}
    0 < \gamma \leq \gamma_{c, \beta} \eqdef \frac{1 - c - \beta}{\beta + 1/2},
\end{equation*}
constant $c \in (0, 1)$. So, by maximizing $c$ the value $\gamma_{c, \beta}$ becomes smaller, thus there is a trade-off.

\textbf{2)} The second term or the neighborhood size (multiplier in front of $\sqn{h}_{\mcL}$)
\begin{equation*}
    \Psi(\beta, \gamma) \eqdef \frac{\beta^{-1}\br{1 - \gamma} + \gamma/2}{c} = \frac{\beta^{-1}\br{1 - \gamma} + \gamma/2}{1 - \gamma/2 - \beta (1 - \gamma)}
\end{equation*}
can be numerically minimized (e.g.~by using WolframAlpha) with constraints $\gamma\in(0, 1]$ and $\beta > 0$. The solution of such optimization problem is $\gamma^\star \approx 1$ and $\beta^\star \approx \xi \in \{3.992, 2.606, 2.613\}$. In fact, $\Psi(\beta^\star, \gamma^\star) \approx 0.5$.
\end{remark}

\paragraph{Functional gap convergence.}
Note that for the quadratic optimization problem \eqref{eq:het_gen_quad_problem}
\begin{equation} \label{eq:grad_func}
    \sqN{\nabla f(x^k)}_{\mcL^{-1}} = \left\langle \mcL x^k - \cb, \mcL^{-1} \br{\mcL x^k - \cb} \right\rangle = 2 \br{f(x^k) - f(x^\star)}.
\end{equation}
Then by rearranging and subtracting $f^\star \eqdef f(x^\star)$ from both sides of inequality \eqref{eq:descent_recursion} we obtain
\begin{eqnarray*}
\Exp{f(x^{k+1}) \;|\; x^k} - f^\star 
&\leq& 
f(x^k) - f^\star - \gamma c \sqN{\nabla f(x^k)}_{\mcL^{-1}} + \gamma \br{\beta^{-1}\br{1 - \gamma} + \gamma/2} \sqn{h}_{\mcL}
\\ &\overset{\eqref{eq:grad_func}}{=}&
\br{f(x^k) - f^\star} - \gamma c \cdot 2\br{f(x^k) - f^\star} + \gamma \br{\beta^{-1}\br{1 - \gamma} + \gamma/2} \sqn{h}_{\mcL}
\\\ &=&
\br{1 - 2 \gamma c} \br{f(x^k) - f^\star} + \gamma \br{\beta^{-1}\br{1 - \gamma} + \gamma/2} \sqn{h}_{\mcL}.
\end{eqnarray*}
After unrolling the recursion 
\begin{eqnarray*}
\Exp{f(x^{k+1}) \;|\; x^k} - f^\star 
&\leq& 
\br{1 - 2 \gamma c}^k \br{f(x^0) - f^\star} + \gamma \br{\beta^{-1}\br{1 - \gamma} + \gamma/2} \sqn{h}_{\mcL} \sum_{i=0}^k \br{1 - 2 \gamma c}^i
\\\ &\leq&
\br{1 - 2 \gamma c}^k \br{f(x^0) - f^\star} + \frac{1}{2c} \br{\beta^{-1}\br{1 - \gamma} + \gamma/2} \sqn{h}_{\mcL}.
\end{eqnarray*}

This result is formalized in the following Theorem.
\begin{theorem} \label{thm:gen_function}
Consider the method \eqref{eq:SGD_generic} with estimator \eqref{eq:hom_gen_estimator} for a homogeneous quadratic problem \eqref{eq:het_gen_quad_problem} with positive-definite matrix $\mL_i \equiv \mL \succ 0$. Then if exists $\mD^{-\frac{1}{2}}$ for $\mD \eqdef \Diag(\mL)$, scaled permutation sketch $\mC'_i = \sqrt{n} e_{\pi_i} e_{\pi_i}^\top$ is used and the step size is chosen as
\begin{equation*}
    0 < \gamma \leq \gamma_{c, \beta} \eqdef \frac{1 - c - \beta}{\beta + 1/2},
\end{equation*}
where $\gamma_{c, \beta} >0$ for $\beta + c < 1$. Then the iterates satisfy
\begin{equation} \label{eq:gen_func_res}
    \Exp{f(x^k)} - f^\star \leq \br{1 - 2 \gamma c}^k \br{f(x^0) - f^\star} + \frac{1}{2c} \br{\beta^{-1}\br{1 - \gamma} + \gamma/2} \sqn{h}_{\mcL},
\end{equation}
where $h = \mcL^{-1}\cb - \frac{1}{\sqrt{n}}\cb$ and $\mcL = \mD^{-\frac{1}{2}} \mL \mD^{-\frac{1}{2}}, \cb = \mD^{-\frac{1}{2}} \bb$.
\end{theorem}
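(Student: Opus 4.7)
The plan is to leverage the per-iteration descent inequality already established in the proof of Theorem \ref{thm:gen_gradient} and convert the gradient-norm bound into a functional-gap bound by exploiting the special structure of the quadratic loss.

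First, I would record the key identity that is specific to the homogeneous quadratic model: since $\nabla f(x) = \mcL x - \cb$ and $x^\star = \mcL^{-1} \cb$, a direct computation gives
\begin{equation*}
    \sqN{\nabla f(x^k)}_{\mcL^{-1}} = \left\langle \mcL x^k - \cb, \mcL^{-1}(\mcL x^k - \cb)\right\rangle = 2\br{f(x^k) - f^\star},
\end{equation*}
which is exactly \eqref{eq:grad_func}. This identity is what bridges the gradient-norm convergence result and the functional-gap result; without it, one would only obtain a bound on $\sqN{\nabla f(x^k)}_{\mcL^{-1}}$.

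Next, I would start from the one-step descent inequality \eqref{eq:descent_recursion} derived inside the proof of Theorem \ref{thm:gen_gradient}, namely
\begin{equation*}
    \gamma c \sqN{\nabla f(x^k)}_{\mcL^{-1}} \leq f(x^k) - \Exp{f(x^{k+1}) \;|\; x^k} + \gamma\br{\beta^{-1}(1-\gamma) + \gamma/2}\sqn{h}_{\mcL}.
\end{equation*}
Substituting the quadratic identity above and subtracting $f^\star$ from both sides converts this into a contraction on the functional gap:
\begin{equation*}
    \Exp{f(x^{k+1})\;|\; x^k} - f^\star \leq (1 - 2\gamma c)\br{f(x^k) - f^\star} + \gamma\br{\beta^{-1}(1-\gamma) + \gamma/2}\sqn{h}_{\mcL}.
\end{equation*}
The step-size constraint $\gamma \leq \gamma_{c,\beta}$ with $\beta + c < 1$ ensures $1 - 2\gamma c \in [0,1)$, so the recursion is genuinely contractive.

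Finally, I would take total expectation and unroll the recursion: the initial-gap term yields the geometric factor $(1-2\gamma c)^k (f(x^0) - f^\star)$, while the constant additive term accumulates as a geometric series $\sum_{i=0}^{k-1}(1-2\gamma c)^i \leq 1/(2\gamma c)$. Multiplying through gives the neighborhood $\frac{1}{2c}(\beta^{-1}(1-\gamma)+\gamma/2)\sqn{h}_{\mcL}$, matching \eqref{eq:gen_func_res}. The only nontrivial step is the quadratic identity relating gradient norm to functional gap; everything else is a routine rearrangement and unrolling, so I do not expect a real obstacle — the work was effectively already done inside Theorem \ref{thm:gen_gradient}.
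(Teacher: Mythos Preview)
Your proposal is correct and mirrors the paper's own argument essentially step for step: invoke the quadratic identity \eqref{eq:grad_func}, plug it into the descent inequality \eqref{eq:descent_recursion} from the proof of Theorem~\ref{thm:gen_gradient}, rearrange to obtain the contraction on $f(x^k)-f^\star$, and unroll via the geometric series bounded by $1/(2\gamma c)$. There is no meaningful difference between your route and the paper's.
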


This result shows that for a proper choice of the step size $\gamma = 1$ and constant $c=1/2$, the functional gap can converge in basically one iteration to the neighborhood of size
\begin{equation*}
    \sqn{h}_{\mcL} = \left\langle \mcL \br{\mcL^{-1}\cb - \frac{1}{\sqrt{n}}\cb}, \mcL^{-1} \cb - \frac{1}{\sqrt{n}}\cb \right\rangle,
\end{equation*}
which equals zero if $\mcL^{-1} \cb = \frac{1}{\sqrt{n}}\cb$. This condition is the same as the condition we obtained at the end of Subsection \ref{sec:gen_convergence} with asymptotic analysis of the iterates in the homogeneous case.

\paragraph{Discussion of the trace.}
Consider a positive-definite $\mL \succ 0$ such that $\exists \mD^{-\frac{1}{2}}$. Thus $\mcL = \mD^{-\frac{1}{2}} \mL \mD^{-\frac{1}{2}}$ has only ones on the diagonal and $\tr(\mcL) = n$. Then
\begin{equation*}
    n \cdot \tr(\mcL^{-1}) = \tr(\mcL) \tr(\mcL^{-1}) = \br{\lambda_1 + \dots + \lambda_n} \br{\frac{1}{\lambda_1} + \dots + \frac{1}{\lambda_n}} \geq n^2,
\end{equation*}
where the last inequality is due to the relation between harmonic and arithmetic means. Therefore $\tr(\mcL^{-1}) = \lambda_1^{-1} + \dots + \lambda_n^{-1} \geq n$ and sum of $\mcL^{-1}$ eigenvalues has to be greater than $n$.

\subsection{Generalization to $n \neq d$ case.}
Our results can be generalized in a similar way as in \cite{szlendak2022permutation}.

\textbf{1)} $d = qn$, for integer $q \geq 1$. Let $\pi = \br{\pi_1, \dots, \pi_d}$ be a random permutation of $\{1, \dots,d\}$. Then for each $i \in \{1, \dots, n\}$ define
\begin{equation*}
    \mC_i' := \sqrt{n} \cdot \sum_{j=q(i-1)+1}^{q i} e_{\pi_j} e_{\pi_j}^{\top}.
\end{equation*}
Matrix $\Exp{\moB^k}$ for the homogeneous preconditioned case can be computed as follows:
\begin{eqnarray*}
    \Exp{\moB^k} &=& 
    \Exp{\frac{1}{n} \sum_{i=1}^n \mC_i' \mcL \mC_i'} 
    \\&=& 
    \frac{1}{n} \sum_{i=1}^n\Exp{\sum_{j=q(i-1)+1}^{q i} n e_{\pi_j} e_{\pi_j}^{\top} \mcL e_{\pi_j} e_{\pi_j}^{\top}} 
    \\&=&
    \sum_{i=1}^n \sum_{j=q(i-1)+1}^{q i} \Exp{e_{\pi_j} e_{\pi_j}^{\top} \mcL e_{\pi_j} e_{\pi_j}^{\top}} 
    \\&=&
    \sum_{i=1}^n \sum_{j=q(i-1)+1}^{q i} \frac{1}{d} \sum_{l=1}^d e_{l} e_{l}^{\top} \mcL e_{l} e_{l}^{\top} 
    \\&=&
    \sum_{i=1}^n \sum_{j=q(i-1)+1}^{q i} \frac{1}{d} \Diag(\mcL)
    \\&=&
    n \frac{q}{d} \Diag(\mcL)
    \\&=&
    \Diag(\mcL) \\&=& \mI.
\end{eqnarray*}
As for the linear term  
\begin{eqnarray*}
\Exp{\mC'\bb} &=& 
\Exp{\frac{1}{n} \sum_{i=1}^n \mC'_i \cb} = 
\frac{1}{n} \sum_{i=1}^n \Exp{\sum_{j=q(i-1)+1}^{q i} \sqrt{n} e_{\pi_j} e_{\pi_j}^{\top} \cb} 
\\&=& 
\frac{1}{\sqrt{n}} \sum_{i=1}^n \sum_{j=q(i-1)+1}^{q i} \frac{1}{d} \mI \cb = \frac{\sqrt{n}q}{d} \mI \cb = \frac{1}{\sqrt{n}} \cb.
\end{eqnarray*}

\textbf{2)} $n = qd$, for integer $q \geq 1$. Define the multiset $S \eqdef \{1, \ldots, 1, 2, \ldots, 2, \ldots, d, \ldots, d\}$, where each number occurs precisely $q$ times. Let $\pi = \br{\pi_1, \ldots, \pi_n}$ be a random permutation of $S$. Then for each $i \in \{1, \ldots, n\}$ define
\begin{equation*}
    \mC_i' := \sqrt{d} \cdot e_{\pi_i} e_{\pi_i}^{\top}.
\end{equation*}

\begin{eqnarray*}
    \Exp{\moB^k} &=& 
    \Exp{\frac{1}{n} \sum_{i=1}^n \mC_i' \mcL \mC_i'} 
    = 
    \frac{1}{n} \sum_{i=1}^n \Exp{d e_{\pi_i} e_{\pi_i}^{\top} \mcL e_{\pi_i} e_{\pi_i}^{\top}} 
    \\&=& 
    \frac{1}{n} \sum_{i=1}^n \frac{1}{d} \sum_{j=1}^d d e_{j} e_{j}^{\top} \mcL e_{j} e_{j}^{\top}
    =
    \frac{1}{n} \sum_{i=1}^n \Diag(\mcL)
    = 
    \mI.
\end{eqnarray*}
The linear term  
\begin{eqnarray*}
    \Exp{\mC'\bb}=
    \Exp{\frac{1}{n} \sum_{i=1}^n \mC'_i \cb} = 
    \frac{1}{n} \sum_{i=1}^n \Exp{\sqrt{d} e_{\pi_i} e_{\pi_i}^{\top} \cb} 
    =
    \frac{\sqrt{d}}{n} \sum_{i=1}^n \frac{1}{d} \mI \cb = \frac{1}{\sqrt{d}} \cb.
\end{eqnarray*}

To sum up both cases, in a homogeneous preconditioned setting 
$\Exp{\moB^k} = \mI$ and 
$$\Exp{\mC'\bb} = \Exp{\frac{1}{n} \sum_{i=1}^n \mC'_i \bb} = \cb/\sqrt{\min(n, d)}.$$

Similar modifications and calculations can be performed for heterogeneous scenarios.
The case when $n$ does not divide $d$ and vice versa is generalized using constructions from \cite{szlendak2022permutation}.

\section{Generalization Beyond Quadratics} \label{sec:generalization}

In this section our analysis is revisited for the more general case of smooth (non-convex) functions and class of compressors. While this result may not be as nuanced as for the quadratic model, it still does not require restrictive assumptions on gradient estimator, unlike prior works.

\subsection{Preliminary Facts}
Our convergence analysis relies on the following Lemma.

\begin{lemma}[Descent Lemma \cite{li2021page}] \label{lem:descent_lemma_appendix}
Suppose that function $f$ is $L$-smooth \eqref{eq:L-matrix-smooth} and let $x^{k+1} \eqdef x^k - \gamma g^k$. Then for any $g^k \in \mathbb{R}^{d}$ and $\gamma>0$, we have
\begin{equation*} \label{eq:descent_lemma}
    f(x^{k+1}) \leq 
    f(x^k) - \frac{\gamma}{2} \sqN{\nabla f(x^k)}
    - \br{\frac{1}{2\gamma} - \frac{L}{2}} \sqn{x^{k+1} - x^k}
    + \frac{\gamma}{2} \sqN{g^k - \nabla f(x^k)},
\end{equation*}
\end{lemma}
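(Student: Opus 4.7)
The plan is to apply matrix smoothness (specialized to $\mathbf{L} = L \cdot \mathbf{I}$) followed by a polarization identity that cleanly decouples the three norms appearing on the right-hand side. Instantiating Assumption~\ref{ass:matrix-smoothness} with $h = x^{k+1} - x^k = -\gamma g^k$ gives the standard quadratic upper bound
\[
    f(x^{k+1}) \leq f(x^k) + \<\nabla f(x^k), x^{k+1} - x^k> + \frac{L}{2}\sqn{x^{k+1} - x^k}.
\]

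Next, I would rewrite $\<\nabla f(x^k), x^{k+1} - x^k> = -\gamma \<\nabla f(x^k), g^k>$ and invoke the polarization identity \eqref{eq:dot_product} in the form $-2\<a,b> = \sqn{a-b} - \sqn{a} - \sqn{b}$ with $a = \nabla f(x^k)$, $b = g^k$, producing
\[
    \<\nabla f(x^k), x^{k+1}-x^k> = \frac{\gamma}{2}\sqN{g^k - \nabla f(x^k)} - \frac{\gamma}{2}\sqN{\nabla f(x^k)} - \frac{\gamma}{2}\sqn{g^k}.
\]
This is the key step: it isolates the error $\sqN{g^k - \nabla f(x^k)}$ (which will become the final term of the lemma) while splitting off a negative quadratic $-\frac{\gamma}{2}\sqn{g^k}$ that can be re-expressed in terms of the iterate displacement.

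The last step is the only computational observation needed: since $\sqn{x^{k+1}-x^k} = \gamma^2 \sqn{g^k}$, the quantity $-\frac{\gamma}{2}\sqn{g^k}$ equals $-\frac{1}{2\gamma}\sqn{x^{k+1}-x^k}$, which combines with the $\frac{L}{2}\sqn{x^{k+1}-x^k}$ term from smoothness to yield $-\br{\frac{1}{2\gamma} - \frac{L}{2}}\sqn{x^{k+1}-x^k}$. Collecting terms reproduces the lemma statement exactly.

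There is no real obstacle here; the proof is essentially bookkeeping around a single polarization identity. The only subtlety worth noting is that the statement uses the scalar smoothness constant $L$ rather than the matrix $\mathbf{L}$ from Assumption~\ref{ass:matrix-smoothness}, so one should read $L$ as the smallest scalar such that $\mathbf{L} \preceq L\cdot\mathbf{I}$. Importantly, no unbiasedness, bounded variance, or any other structural assumption on $g^k$ is used — which is precisely what makes this descent lemma well-suited to the biased estimators arising in IST.
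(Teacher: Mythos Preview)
Your proof is correct and is the standard argument for this lemma. Note that the paper does not actually supply its own proof of this statement; it is quoted with a citation to \cite{li2021page}, and your derivation via $L$-smoothness plus the polarization identity \eqref{eq:dot_product} is precisely the expected route.
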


If a continuously differentiable function $f$ is $L$-smooth \eqref{eq:L-matrix-smooth}, then for any $x, y \in \mathbb{R}^{d}$, it is satisfied
\begin{equation} \label{eq:smoothness}
    \norm{\nabla f(x) - \nabla f(y)} \leq L \norm{x - y}.
\end{equation}

\begin{lemma}[\citet{khaled2022better}]
    Let $f$ be $L$-smooth \eqref{eq:L-matrix-smooth} and $f^{\inf}$-lower bounded. Then for any $x\in \bR^d$ we have:
    \begin{equation} \label{eq:grad_f_lemma}
        \sqN{\nabla f(x)} \leq 2 L (f(x) - f^{\inf})
    \end{equation}
\end{lemma}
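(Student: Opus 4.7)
The plan is to invoke the smoothness upper bound with a carefully chosen test direction and then compare with the global lower bound on $f$. Specifically, starting from the matrix smoothness inequality \eqref{eq:L-matrix-smooth_} specialized to $\mL = L\mI$, for any $x, h \in \R^d$ I would write
\begin{equation*}
f(x+h) \;\leq\; f(x) + \langle \nabla f(x), h\rangle + \tfrac{L}{2}\|h\|^2.
\end{equation*}
The right-hand side is a convex quadratic in $h$, so its global minimum is attained at the gradient-descent step $h^\star = -\tfrac{1}{L}\nabla f(x)$.

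Substituting $h = h^\star$ into the smoothness bound and simplifying the two terms involving $\nabla f(x)$ yields
\begin{equation*}
f\bigl(x - \tfrac{1}{L}\nabla f(x)\bigr) \;\leq\; f(x) - \tfrac{1}{L}\|\nabla f(x)\|^2 + \tfrac{1}{2L}\|\nabla f(x)\|^2 \;=\; f(x) - \tfrac{1}{2L}\|\nabla f(x)\|^2.
\end{equation*}
Since $f^{\inf}$ is a global lower bound for $f$, the left-hand side is at least $f^{\inf}$. Rearranging the resulting inequality $f^{\inf} \leq f(x) - \tfrac{1}{2L}\|\nabla f(x)\|^2$ gives precisely $\|\nabla f(x)\|^2 \leq 2L(f(x) - f^{\inf})$, which is \eqref{eq:grad_f_lemma}.

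Because the argument is a one-shot calculation, there is no genuine obstacle: the entire content is the optimal choice of the test direction $h^\star$, which turns the quadratic upper bound into an inequality relating the sub-optimality gap to the squared gradient norm. The only mild point of care is to apply the smoothness condition in its scalar form $L\mI$ rather than the more general matrix form $\mL$ used elsewhere in the paper, so that the minimizer of the quadratic in $h$ is scalar-valued and the constant $\tfrac{1}{2L}$ comes out cleanly.
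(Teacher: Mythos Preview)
Your argument is correct and is precisely the standard proof of this inequality: apply scalar $L$-smoothness with the optimal step $h^\star=-\tfrac{1}{L}\nabla f(x)$ and compare against the lower bound $f^{\inf}$. Note that the paper does not actually prove this lemma; it is simply cited from \citet{khaled2022better}, so there is no paper proof to compare against, but your derivation matches the classical one.
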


We also use introduced by \citet{szlendak2022permutation} notion of $AB$ inequality for a collection of (correlated) compressors $\cC_i$.

\begin{definition}[$AB$ inequality]  \label{def:AB_inequality}
A collection of random unbiased operators $\cC_1, \dots, \cC_n : \bR^d \to \bR^d$  satisfies $AB$ inequality if there exist constants $A, B \geq 0$ such that
\begin{equation} \label{eq:AB_inequality} 
\begin{aligned}
    \E\sqN{ \frac{1}{n}\sum_{i=1}^n \cC_i(x_i) - \frac{1}{n}\sum_{i=1}^n x_i}  &\leq A \frac{1}{n}\sum_{i=1}^n \norm{x_i}^2 - B \sqN{\frac{1}{n}\sum_{i=1}^n x_i}
\end{aligned}
\end{equation}
for all $x_1, \dots, x_n \in \bR^d$. For brevity: $\{\cC_i\}_{i=1}^n \in \mathbb{C}(A,B)$.
\end{definition}
This definition is handy for analyzing algorithms like IST as it allows to generalize analysis for the case when compressors $\cC_1, \dots, \cC_n$ are dependent, which happens when the parameters of the model are randomly decomposed in a non-overlapping fashion. Moreover, $AB$ inequality is satisfied for unbiased compressors \eqref{def:unbiased_compressor}: $\cC_i \in \bU(\omega_i)$ for all $i$.

Denote $w_i^k \eqdef \cC_i^k (x^k)$.
If compressors $\cC_i^k$ satisfy the \emph{perfect reconstruction} property $\frac{1}{n} \sum_{i=1}^n \cC^k_i(x) = x$, which makes sense for the IST formulation, then method \eqref{eq:IST} can be reformulated in the following way
\begin{eqnarray} \label{eq:IST_general}
    x^{k+1} &=& 
    \frac{1}{n} \sum_{i=1}^n \br{w_i^k - \gamma \cQ_i^k \br{\nabla f_i(w_i^k)}} \notag
    \\ &=&
    \frac{1}{n} \sum_{i=1}^n \cC_i^k (x^k) - \gamma \frac{1}{n} \sum_{i=1}^n \cQ_i^k \br{\nabla f_i(w_i^k))} \notag
    \\ &=&
    x^k - \gamma \underbrace{\frac{1}{n} \sum_{i=1}^n \cQ_i^k \br{\nabla f_i(\cC_i^k (x^k))}}_{g^k}.
\end{eqnarray}
This reformulation makes the algorithm amenable to analysis using Lemma \ref{lem:descent_lemma_appendix}.

\subsection{Convergence Analysis}

\begin{theorem}
Let $f$ and $f_i$ be $L$ and $L_i$-smooth \eqref{eq:L-matrix-smooth} respectively. Collection of unbiased compressors $\cC_i^k, \cQ_i^k$ satisfy $AB$-inequality \eqref{eq:AB_inequality} and property $\frac{1}{n} \sum_{i=1}^n \cC^k_i(x) = x$ for every $k$. Then for step size chosen as 
$$\gamma \leq \min \left\{\frac{1}{L}, \frac{\sqrt{1 + \nicefrac{2}{(A L_{\max} K)}} - 1}{2}\right\}$$
the iterations of Algorithm~\eqref{eq:IST_general} for any $K \geq 1$ satisfy 
\begin{equation} \label{eq:general_res}
    \min_{0\leq k \leq K-1} \sqN{\nabla f(x^k)} 
    \leq 
    \frac{6 \br{f(x^0) - \finf}}{\gamma K} + 2\br{1+(1+\beta^{-1})A} \max_k \sqn{x^k} \overline{L^2_\omega}
    + 4 A L_{\max} (1+\beta) \Delta,
\end{equation}
where $\beta \leq \gamma, \overline{L^2_\omega} = \frac{1}{n} \sum_{i=1}^n L_i^2 \omega_i, L_{\max} = \max_i L_i$, and $\Delta = \frac{1}{n} \sum_{i=1}^n \br{\finf - f_i^{\inf}}$.
\end{theorem}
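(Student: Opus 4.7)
The plan is to apply the Descent Lemma \ref{lem:descent_lemma_appendix}, which, since $\gamma \leq 1/L$ renders its middle term harmless, reduces everything to bounding $\Exp{\sqN{g^k - \nabla f(x^k)}}$. Introduce the shorthand $\Phi^k \eqdef f(x^k) - \finf$; the goal is to extract a recursion for $\Phi^k$ from which I can telescope and isolate $\sqN{\nabla f(x^k)}$.

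For the residual $\Exp{\sqN{g^k - \nabla f(x^k)}}$, my first move is the variance decomposition \eqref{eq:variance_decomposition} conditioned on the compressors $\{\cC_i^k\}$: because $\cQ_i^k$ is unbiased, the inner expectation is $\tfrac{1}{n}\sum_i \nabla f_i(w_i^k)$, so the residual splits cleanly into a quantization-noise piece $g^k - \tfrac{1}{n}\sum_i \nabla f_i(w_i^k)$ and a compression-bias piece $\tfrac{1}{n}\sum_i \nabla f_i(w_i^k) - \nabla f(x^k)$. The bias is easy: \eqref{eq:average_inequality}, smoothness \eqref{eq:smoothness} of each $f_i$, and the unbiased compressor bound on $\cC_i^k$ give $\overline{L^2_\omega}\sqn{x^k}$. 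The noise is controlled by the $AB$-inequality \eqref{eq:AB_inequality} applied to $\{\cQ_i^k\}$, and the resulting $A \cdot \tfrac{1}{n}\sum_i \sqN{\nabla f_i(w_i^k)}$ is then split via \eqref{eq:triangle} with parameter $\beta$, smoothness once more, and finally \eqref{eq:grad_f_lemma} to turn $\sqN{\nabla f_i(x^k)}$ into $2L_{\max}(f_i(x^k) - f_i^{\inf})$; averaging over $i$ and using the identity $\tfrac{1}{n}\sum_i (f_i(x^k) - f_i^{\inf}) = \Phi^k + \Delta$ produces
\begin{equation*}
    \Exp{\sqN{g^k - \nabla f(x^k)}} \leq 2A(1+\beta)L_{\max}(\Phi^k + \Delta) + \bigl(1 + A(1+\beta^{-1})\bigr)\overline{L^2_\omega}\sqn{x^k},
\end{equation*}
with coefficients matching those in \eqref{eq:general_res}.

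Feeding this back into the descent lemma yields a Gronwall-type recursion
\begin{equation*}
    \Exp{\Phi^{k+1}} \leq (1+\alpha)\Phi^k - \tfrac{\gamma}{2}\sqN{\nabla f(x^k)} + \alpha \Delta + \tfrac{\gamma}{2}\bigl(1 + A(1+\beta^{-1})\bigr)\overline{L^2_\omega}\sqn{x^k},
\end{equation*}
with $\alpha \eqdef \gamma A(1+\beta)L_{\max}$. Squaring the step-size constraint reveals $\gamma(1+\gamma)AL_{\max} \leq 1/(2K)$, and combining this with $\beta \leq \gamma$ gives $K\alpha \leq 1/2$, so $(1+\alpha)^K \leq \sqrt{e}$. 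From here I would rearrange for $\sqN{\nabla f(x^k)}$, telescope across $k = 0,\ldots,K-1$, drop $\Exp{\Phi^K} \geq 0$, and handle the leftover $\alpha \sum_k \Phi^k$ by unrolling the same recursion to bound $\max_k \Phi^k$ in terms of $\Phi^0$ and the two neighborhood contributions. Dividing by $\gamma K/2$ and replacing averages with $\min_k \sqN{\nabla f(x^k)}$ on the left and $\max_k \sqn{x^k}$ on the right delivers \eqref{eq:general_res}.

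The main obstacle is precisely this last Gronwall step. The factor $(1+\alpha)$ in front of $\Phi^k$ is strictly greater than one, so naive telescoping would amplify the $\Phi^0$ term exponentially in $K$ and destroy the target $\cO(1/(\gamma K))$ rate. The step-size bound $\gamma \leq (\sqrt{1 + 2/(AL_{\max}K)} - 1)/2$ is engineered precisely so that $K\alpha$ stays bounded by $1/2$; the feedback of $\max_k \Phi^k$ into the neighborhood terms then contributes only harmless additional constant factors, and the rest is careful bookkeeping to produce the constants $2$, $4$, and $6$ visible in \eqref{eq:general_res}.
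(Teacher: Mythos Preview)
Your proposal is correct and reaches the target bound, but it diverges from the paper's argument in two places worth noting.

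First, for the split of $\Exp{\sqN{g^k-\nabla f(x^k)}}$ you invoke the variance decomposition \eqref{eq:variance_decomposition} conditioned on $\{\cC_i^k\}$; the paper instead applies the Young-type inequality \eqref{eq:triangle} to the same split, incurring an extra factor of $2$. This is why your residual bound carries coefficients $2A(1+\beta)L_{\max}$ and $\bigl(1+A(1+\beta^{-1})\bigr)$ while the paper's are exactly twice as large, and why the final constants $4$ and $2$ in \eqref{eq:general_res} look generous from your side. Your split is sharper; the paper's is simply cruder here.

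Second, and more substantively, the paper handles the Gronwall-type growth $(1+D)\Phi^k$ not by your naive-telescope-plus-feedback argument but by the exponentially weighted telescoping technique of \citet{stich2019unified}: one introduces weights $w^k=w^{k-1}/(1+D)$ with $D=2\gamma A(1+\beta)L_{\max}$, multiplies the recursion by $w^k$ so that $(1+D)w^k\Phi^k=w^{k-1}\Phi^k$ telescopes \emph{exactly}, and then lower-bounds $W^K=\sum_k w^k\geq Kw^{-1}/(1+D)^K$ together with $(1+D)^K\leq e^{DK}\leq e\leq 3$ under the step-size constraint. The factor $6$ in \eqref{eq:general_res} drops out immediately from $2\cdot 3$, with no separate unrolling of $\max_k\Phi^k$. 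Your route also works (I checked: $K\alpha\leq 1/2$ gives $(1+\alpha)^K\leq\sqrt{e}$, and the feedback contributes at most a factor $1+\sqrt{e}/2<2$ on each neighborhood term), and in fact yields slightly better constants, but the bookkeeping is messier and the claim that it ``delivers \eqref{eq:general_res}'' hides more work than the paper's one-line weighted sum.
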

Note that minimum over squared gradient norm \(\sqN{\nabla f(x^k)}\) and maximum over model weights \(\sqn{x^k}\) in \eqref{eq:general_res} can be replaced with with weighted sums due to proof step \eqref{eq:weighted_sum}.
Let us contrast the obtained convergence result with what we have for the quadratic case in \eqref{eq:het_gen_grad_res}. The first term is basically the same and decreases as $1/K$ for constant step size $\gamma$. The other terms are different as they involve the norm of the iterates $\|x^k\|^2$ throughout training. However, they also can not be decreased by diminishing the step size, which is similar to the quadratic case. Thus, we showed that for the more general case of smooth losses, IST can converge to the irreducible neighborhood of the stationary point. The obtained conclusions highlight the generalizability of our insights beyond the quadratic model.

Moreover, there is an additional part representing the heterogeneity of the distributed problem proportional to $\Delta$. In contrast to the term involving $\|x^k\|^2$ it can be eliminated in the homogeneous case when functions have a shared \say{minimizer} $f_i^{\inf} = \finf$, which may hold in the overparametrized regime.

\textbf{Contrasting to prior works.} \citet{khaled2019gradient} analyzed a similar method with unbiased compression in the single node and strongly convex setting. \citet{yuan2022distributed} extended their results to a finite-sum case with random sampling of one client. However, prior works suffer from a very strong condition on the sparsification variance $\omega \lesssim \nicefrac{\mu}{L}$, which is not the case for our analysis. Another important difference is that convergence bounds in \cite{khaled2019gradient} and \cite{yuan2022distributed} depend on the norm of the optimal solution $x^\star = \argmin f(x)$, which may not exist in the non-convex setting. 

\begin{proof}
We start with the Descent Lemma \eqref{eq:descent_lemma}

\begin{eqnarray} \label{eq:descent_lemma_}
    f(x^{k+1}) &\leq&
    f(x^k) - \frac{\gamma}{2} \sqN{\nabla f(x^k)}
    - \br{\frac{1}{2\gamma} - \frac{L}{2}} \sqn{x^{k+1} - x^k}
    + \frac{\gamma}{2} \sqN{g^k - \nabla f(x^k)} \notag
    \\ &\leq&
    f(x^k) - \frac{\gamma}{2} \sqN{\nabla f(x^k)}
    + \frac{\gamma}{2} \sqN{g^k - \nabla f(x^k)},
\end{eqnarray}
where the step size is chosen as $\gamma \leq 1/L$.

Next prove an auxiliary result needed to work with the last term of \eqref{eq:descent_lemma_}
\begin{eqnarray} \label{eq:ES_distributed}
    \E \sqN{\frac{1}{n} \sum_{i=1}^n \nabla f_i(\cC_i^k (x^k)) - \nabla f_i(x^k)}
    &\overset{\eqref{eq:average_inequality}}{\leq}&
    \frac{1}{n} \sum_{i=1}^n \E \sqN{\nabla f_i(\cC_i^k (x^k)) - \nabla f_i(x^k)} \notag
    \\ &\overset{\eqref{eq:smoothness}}{\leq}&
    \frac{1}{n} \sum_{i=1}^n L_i^2 \E \sqN{\cC_i^k(x^k) - x^k} \notag
    \\&\overset{\eqref{eq:unbiased_compressor}}{\leq}&
    \frac{1}{n} \sum_{i=1}^n L_i^2 \omega_i \sqn{x^k} \notag
    \\ &=&
    \overline{L^2_\omega} \sqn{x^k},
\end{eqnarray}
for $\overline{L^2_\omega} \eqdef \frac{1}{n} \sum_{i=1}^n L_i^2 \omega_i$.

Recall the expression for $g^k = \frac{1}{n} \sum_{i=1}^n \cQ_i^k \br{\nabla f_i(w_i^k)}$ and $w_i^k \eqdef \cC_i^k (x^k)$. Now we can upper bound the last term of \eqref{eq:descent_lemma_}
\begin{eqnarray*}
    \E \sqN{g^k - \nabla f(x^k)} &=&
    \E \sqN{\frac{1}{n} \sum_{i=1}^n \cQ_i^k\br{\nabla f_i(w_i^k)} - \nabla f_i(x^k)} \\
    &\overset{\eqref{eq:triangle}}{\leq}&
    2 \E \sqN{\frac{1}{n} \sum_{i=1}^n \cQ_i^k\br{\nabla f_i(w_i^k)} - \nabla f_i(w_i^k)} +
    2 \E \sqN{\frac{1}{n} \sum_{i=1}^n \nabla f_i(w_i^k) - \nabla f_i(x^k)} \\
    &\overset{\eqref{eq:AB_inequality}}{\leq}&
    2 \sbr{A \frac{1}{n} \sum_{i=1}^n \E \sqN{ \nabla f_i(w_i^k)} - B \E \sqN{\frac{1}{n} \sum_{i=1}^n \nabla f_i(w_i^k)}} 
    \\ && \qquad +
    2 \E \sqN{\frac{1}{n} \sum_{i=1}^n \nabla f_i(\cC_i^k (x^k)) - \nabla f_i(x^k)} \\
    &\overset{\eqref{eq:ES_distributed}}{\leq}&
    2 A \frac{1}{n} \sum_{i=1}^n \E \sqN{\nabla f_i(\cC_i^k (x^k)) \pm \nabla f_i(x^k)} +
    2 \frac{1}{n} \sum_{i=1}^n L_i^2 \omega_i \sqn{x^k} 
    \\&\overset{\eqref{eq:triangle}}{\leq}&
    2 A \frac{1}{n} \sum_{i=1}^n\sbr{\br{1 + \beta^{-1}} \E \sqN{\nabla f_i(\cC_i^k (x^k)) - \nabla f_i(x^k)} +  \br{1 + \beta} \sqN{\nabla f_i(x^k)}} 
     +
    2 \overline{L^2_\omega} \sqn{x^k} 
    \\&\overset{\eqref{eq:ES_distributed}}{\leq}&
    2 A \frac{1}{n} \sum_{i=1}^n \sbr{\br{1 + \beta^{-1}} L_i^2 \omega_i \sqn{x^k} + \br{1 + \beta} \sqN{\nabla f_i(x^k)}} +
    2 \overline{L^2_\omega} \sqn{x^k} \\
    &\overset{\eqref{eq:grad_f_lemma}}{\leq}&
    2 \br{1 + A \br{1 + \beta^{-1}}} \overline{L^2_\omega} \sqn{x^k} + 2 A \br{1 + \beta} \frac{1}{n} \sum_{i=1}^n 2 L_i \sbr{f_i(x^k) - f_i^{\inf}} \\
    &=&
    2 \br{1 + A \br{1 + \beta^{-1}}} \overline{L^2_\omega} \sqn{x^k} + 4 A \br{1 + \beta} \frac{1}{n} \sum_{i=1}^n L_i \sbr{f_i(x^k) - \finf - f_i^{\inf} + \finf} \\
    &\leq& 
    2 \br{1 + A \br{1 + \beta^{-1}}} \overline{L^2_\omega} \sqn{x^k} + 4 A \br{1 + \beta} L_{\max} \frac{1}{n} \sum_{i=1}^n \sbr{f_i(x^k) - \finf + \finf - f_i^{\inf}} \\
    &=&
    2 \br{1 + A \br{1 + \beta^{-1}}} \overline{L^2_\omega} \sqn{x^k} + 4 A \br{1 + \beta} L_m \Big[f(x^k) - \finf + \underbrace{\frac{1}{n} \sum_{i=1}^n \br{\finf - f_i^{\inf}}}_{\Delta}\Big],
\end{eqnarray*}
where $L_{m} = L_{\max} = \max L_i$.

Combined with \eqref{eq:descent_lemma_}
and denoting $\delta^k \eqdef f(x^k) - \finf$ it leads to

\begin{equation*}
\begin{aligned}
    \delta^{k+1} 
    &\leq 
    \delta^k - \frac{\gamma}{2} \sqN{\nabla f(x^k)} + 2 \gamma A \br{1 + \beta} L_m \br{f(x^k) - \finf} \\
    &\qquad + 
    \frac{\gamma}{2} \sbr{2 \br{1 + A \br{1 + \beta^{-1}}} \overline{L^2_\omega} \sqn{x^k} + 4 A \br{1 + \beta} L_m \Delta} \\
    &=
    \br{1 + 2 \gamma A L_m \br{1 + \beta}} \delta^k - \frac{\gamma}{2} \sqN{\nabla f(x^k)} 
    \\
    &\qquad + 
    \frac{\gamma}{2} \underbrace{\sbr{2 \br{1 + A \br{1 + \beta^{-1}}} \overline{L^2_\omega} \sqn{x^k} + 4 A \br{1 + \beta} L_m \Delta}}_{C^k}.
\end{aligned}
\end{equation*}

After rearranging we obtain
\begin{equation} \label{eq:init_recursion}
\begin{aligned}
    \sqN{\nabla f(x^k)} \leq 
    \frac{2}{\gamma} \big(1 + \underbrace{2 \gamma A \br{1 + \beta} L_m}_{D} \big) \delta^k - \frac{2}{\gamma} \delta^{k+1} + C^k.
\end{aligned}
\end{equation}

Next by following technique by \citet{stich2019unified} we introduce an exponentially decaying weighting sequence 
\begin{equation*}
    w^k = \frac{w^{k-1}}{1 + D} \leq \dots \leq w^{-1}.   
\end{equation*}

Multiplying recursion \eqref{eq:init_recursion} by $w^k$, we get
\begin{equation*}
\begin{aligned} 
    w^k  \sqN{\nabla f(x^k)} & \leq 
    \frac{2w^k\left(1 + D\right)}{\gamma} \delta^{k} - \frac{2w^k}{\gamma} \delta^{k+1} + w^k C^k \\ & =
    \frac{2 w^{k-1}}{\gamma} \delta^{k} - \frac{2 w^k}{\gamma} \delta^{k+1} + w^k C^k
\end{aligned}
\end{equation*}
After summing up both sides for $k$ from 0 to $K-1$, and telescoping terms
\begin{equation} \label{eq:sum_recursion}
    \sum_{k=0}^{K-1} w^k \sqN{\nabla f(x^k)} \leq \frac{2 w^{-1}}{\gamma} \delta^{0} - \frac{2 w^{K-1}}{\gamma} \delta^{K} + \sum_{k=0}^{K-1} w^k C^k.
\end{equation}
Now define $W^K \eqdef \sum_{k=0}^{K-1} w^k$ and divide both sides of \eqref{eq:sum_recursion} by $W^K$
\begin{equation} \label{eq:weighted_sum}
    \min_{0\leq k \leq K-1} \sqN{\nabla f(x^k)} \leq
    \frac{1}{W^k} \sum_{k=0}^{K-1} w^k \sqN{\nabla f(x^k)} \leq \frac{2 w^{-1}}{W^k\gamma} \delta^{0} - \frac{2 w^{K-1}}{W^k\gamma} \delta^{K} + \frac{1}{W^k} \sum_{k=0}^{K-1} w^k C^k.
\end{equation}
By using the fact that 
\begin{equation*}
    W^k = \sum_{k=0}^{K-1} w^k \geq \sum_{k=0}^{K-1} \min_{0 \leq i \leq K-1} w^i = K w^{K-1} = \frac{K w^{-1}}{(1 + D)^K},
\end{equation*}
we obtain 
\begin{equation} \label{eq:exponential_bound}
\begin{aligned}
    \min_k \sqN{\nabla f(x^k)} 
    &\leq 
    2\frac{\br{1 + 2 \gamma A \br{1 + \beta} L_m}^K}{\gamma K} \delta^0 + \max_k C^k.
\end{aligned}
\end{equation}
Next to simplify the obtained upper bound we use the fact that $1 + x \leq \exp (x)$
\begin{equation*}
\left(1 + 2 \gamma A \br{1 + \beta} L_m\right)^{K} \leq\left(\exp \left(2 \gamma A \br{1 + \beta} L_m\right)\right)^{K} = \exp \left(2 \gamma A \br{1 + \beta} L_m K\right) \leq \exp (1) \leq 3,
\end{equation*}
where the second inequality for $\beta \leq \gamma$ holds if 
$$\underbrace{2 A L_m K}_{A'_K} \gamma \br{1 + \gamma} \leq 1.$$
Then condition $A'_K \gamma + A'_K \gamma^2 -1 \leq 0$ holds for
\begin{equation*}
    \gamma \leq \frac{\sqrt{1 + 4/A'_K} - 1}{2} = 
    \frac{\sqrt{1 + \nicefrac{2}{(A L_m K)}} - 1}{2}.
\end{equation*}

As a result \eqref{eq:exponential_bound} leads to
\begin{equation*}
\begin{aligned}
    \min_k \sqN{\nabla f(x^k)} 
    \leq 
    \frac{6 \br{f(x^0) - \finf}}{\gamma K} + 
    2 \br{1 + A \br{1+\beta^{-1}}} \overline{L^2_\omega} \max_k \sqn{x^k} + 4 A \br{1 + \beta} L_m \Delta,
\end{aligned}
\end{equation*}
where $\Delta = \frac{1}{n} \sum_{i=1}^n \br{\finf - f_i^{\inf}}$ and $\overline{L^2_\omega} = \frac{1}{n} \sum_{i=1}^n L_i^2 \omega_i$.

\end{proof}

\section{Comparison to related works} \label{sec:app_comparison}

\paragraph{Overview of theory provided in the original IST work \cite{yuan2022distributed}.}

The authors consider the following method
\begin{equation} \label{eq:orig_IST_method}
    x^{k+1} = \cC (x^k) - \gamma \nabla f_{i_k} (\cC(x^k)),
\end{equation}

where $[\cC(x)]_i = x_i \cdot \mathcal{B}e (p)$\footnote{$\mathcal{B}_p (x) \eqdef \left\{\begin{array}{ll} x/p & \text { with probability } p \\ 0 & \text { with probability } 1-p\end{array}\right.$} is a Bernoulli sparsifier and $i_k$ is sampled uniformly at random from $[n]$.

The analysis in \cite{yuan2022distributed} relies on the assumptions

\begin{enumerate}
    \item $L_i$-smoothness of individual losses $f_i$;
    \item $Q$-Lipschitz continuity of $f$: $|f(x) - f(y)| \leq Q \norm{x - y}$;
    \item Error bound (or PŁ-condition): $\norm{\nabla f(x)} \geq \mu \norm{x^\star - x}$, where $x^\star$ is the global optimum;
    \item Stochastic gradient variance: $\Exp{\sqN{\nabla f_{i_k} (x)}} \leq M + M_f \sqN{\nabla f(x)}$;
    \item $\Exp{\nabla f_{i_k}(\cC(x^k)) \,|\, x^k} = \nabla f (x^k) + \eps, \quad \norm{\eps} \leq B$.
\end{enumerate}

Convergence result \cite[Theorem 1]{yuan2022distributed} for step size $\gamma=\nicefrac{1}{(2 L_{\max})}$:
\begin{equation*}
    \min_{k \in \{1,\dots,K\}} \Exp{\sqN{\nabla f (x^k)}} \leq \frac{f(x^0) - f(x^\star)}{\alpha (K+1)} + \frac{1}{\alpha} \cdot \br{\frac{B Q}{2 L_{\max}} + \frac{5 L_{\max} \omega}{2} \sqn{x^\star} + \frac{M}{4 L_{\max}}},
\end{equation*}
where $\alpha \eqdef \frac{1}{2 L_{\max}} \br{1 - \frac{M_f}{2}} - \frac{5 \omega L_{\max}}{2 \mu^2}$, $\omega \eqdef \frac{1}{p} - 1 < \frac{\mu^2}{10 L_{\max}^2}$, and $L_{\max} \eqdef \max_i L_i$.

If Lipschitzness and Assumption 5 are replaced with \emph{norm condition}: 
\begin{equation} \label{eq:norm_cond}
    \norm{\Exp{\nabla f_{i_k}(\cC(x^k)) \,|\, x^k} - \nabla f(x^k)} \leq \theta \norm{\nabla f(x^k)}    
\end{equation}
they obtain the following (for step size $\gamma=\nicefrac{1}{2 L_{\max}}$)
\begin{equation*}
    \min_{k \in \{1,\dots,K\}} \Exp{\sqN{\nabla f (x^k)}} \leq \frac{f(x^0) - f(x^\star)}{\alpha (K+1)} + \frac{1}{\alpha} \cdot \br{\frac{5 L_{\max} \omega}{2} \sqn{x^\star} + \frac{M}{4 L_{\max}}},
\end{equation*}
where $\alpha = \frac{1}{2 L_{\max}} \br{\frac12 - \theta - \frac{M_f}{2}} - \frac{5 \omega L_{\max}}{2 \mu^2}$ and $\omega = \frac{1}{p} - 1 < \frac{\mu^2}{5 L_{\max}^2 \br{\frac12 - \theta - \frac{M_f}{2}}}$.

\begin{remark}
The original method \eqref{eq:orig_IST_method} does not incorporate gradient sparsification, which can create a significant disparity between theory and practice. This is because the gradient computed at the compressed model, denoted as $\nabla f(\cC(x))$, is not guaranteed to be sparse and representative of the submodel computations. Such modification of the method also significantly simplifies theoretical analysis, as using a single sketch (instead of $\mC\mL\mC$) allows for an unbiased gradient estimator.

Through our analysis of the IST gradient estimator in Equation \eqref{eq:hom_gen_estimator}, we discover that conditions---such as Assumption 5 and Inequality \eqref{eq:norm_cond}---are not satisfied, even in the homogeneous setting for a simple quadratic problem. Furthermore, it is evident that such conditions are also not met for logistic loss. At the same time, in general, it is expected that insightful theory for general (non-)convex functions should yield appropriate results for quadratic problems. Additionally, it remains unclear whether the norm condition \eqref{eq:norm_cond} is satisfied in practical scenarios. The situation is not straightforward---even for quadratic problems---as we show in the expression for $\sigma^2$ in Equation \eqref{eq:hetero_bound}.
\end{remark}

\paragraph{Masked training \cite{mohtashami2022masked}.}

The authors consider the following \say{Partial SGD} method

\begin{equation*}
\begin{aligned}
   \hat{x}^k &= x^k + \delta x^k= x^k - (\mathrm{1} - p) \odot x^k\\
   x^{k+1} &= x^k - \gamma p \odot \nabla f(\hat{x}^k, \xi^k),
\end{aligned}
\end{equation*}

where $\nabla f(x, \xi)$ is an unbiased stochastic gradient estimator of a $L$-smooth loss function $f$, $\odot$ is an element-wise product, and $p$ is a binary sparsification mask.

\citet{mohtashami2022masked} make the following \say{bounded perturbation} assumption
\begin{equation} \label{eq:bound_perturb}
    \max_{k} \frac{\norm{\delta x^k}}{\max \left\{\norm{p^k \odot \nabla f(x^k)}, \norm{p^k \odot \nabla f(\hat{x}^k)}\right\}} \leq \frac{1}{2L}.
\end{equation}

This inequality may not hold for a simple convex case. Consider a function $f(x) = \frac{1}{2} x^\top A x$, for
\begin{equation*}
    A = \begin{pmatrix}
        a & 0 \\
        0 & c
        \end{pmatrix}, 
    \qquad
    x^0 = \begin{pmatrix}
        x_1 \\
        x_2
        \end{pmatrix}, 
    \qquad
    p^0 = \begin{pmatrix}
        0 \\
        1
        \end{pmatrix}.
\end{equation*}

Then condition \eqref{eq:bound_perturb} (at iteration $k=0$) will be equivalent to
\begin{equation*}
    \frac{x_1}{c x_2} \leq \frac{1}{2a} \Leftrightarrow 2 \leq \frac{2a}{c} \leq \frac{x_2}{x_1},
\end{equation*}
which clearly does not hold for an arbitrary initialization $x^0$.

In addition, the convergence bound in \cite[Theorem 1]{mohtashami2022masked} suggests choosing the step size as $\gamma_0 \alpha^k$, where 
\begin{equation*}
    \alpha^k = \min \left\{1, \frac{\left\langle p^k \odot \nabla f(x^k), p^k \odot \nabla f(\hat{x}^k)\right\rangle}{\left\|p^k \odot \nabla f(\hat{x}^k)\right\|^2}\right\}    
\end{equation*}
is not guaranteed to be positive to the inner product $\left\langle p^k \odot \nabla f(x^k), p^k \odot \nabla f(\hat{x}^k)\right\rangle$, which may lead to non-convergence of the method.

\paragraph{Optimization with access to auxiliary information framework \cite{chayti2022optimization}}
suggests modeling training with compressed models via performing gradient steps with respect to function $h(x) \eqdef \ExpSub{\mathcal{M}}{f(1_{\mathcal{M}} \odot x)}$. This function allows access to a sparse/low-rank version of the original model $f(x)$. They impose the following bounded Hessian dissimilarity assumption on $h$ and $f$

\begin{equation*}
    \left\|\nabla^2 f(x) - \ExpSub{\mathcal{M}}{\mD_{\mathcal{M}} \nabla^{2} f(1_{\mathcal{M}} \odot x) \mD_{\mathcal{M}}}\right\|_{2} \leq \delta,
\end{equation*}
where $1_{\mathcal{M}}$ and $\mD_{\mathcal{M}} = \mathrm{Diag}(1_{\mathcal{M}})$ refer to a binary vector and matrix sparsification masks.

This approach relies on variance-reduction and requires gradient computations on the full model $x$, and thus it is not suitable for our problem setting.

\paragraph{Comparison to the work of \citet{liao2022on}.}
Next, we try our best to briefly and accurately represent some of the previous work’s findings and comment on the differences.

The authors provide a \textit{high probability} convergence analysis of a \say{Single Hidden-Layer Neural Network with ReLU activations} based on the Neural Tangent Kernel (NTK) framework. The network’s first layer weights are initialized based on $\mathcal{N} (0, \kappa^2 \mathbf{I})$ and the weight vector of the second layer is initialized uniformly at random from $\{-1, 1\}$.
In contrast, we do not make any assumptions on the initialized parameters $x$ (in our notation).

The second differentiation is assumptions on the data. \citet{liao2022on} assume that for every data point $(a_j, y_j)$, it holds that $||a_j||^2 = 1$ and $|y_j| \leq C-1$ for some constant $C \geq 1$. Moreover, for any $j \neq l$, it holds that the points $a_i, a_l$ are not co-aligned, i.e., $a_i \neq \xi a_l$ for any $\xi \in \mathbb{R}$. In contrast, we do not make any assumptions about the data apart from the ones on matrices $\mathbf{L}_i$.
In addition, analysis by \citet{liao2022on} assumes that the number of hidden nodes is greater than a certain quantity and that NN’s weights distance from initialization is uniformly bounded.

\citet{liao2022on} consider a regression (MSE) loss function, a special case of quadratic loss and full gradients computation. They provide guarantees for IST under a \say{simplified assumption that every worker has full data access}, which corresponds to the homogeneous setting in our terminology.

\paragraph{Comparison of IST and 3D Parallelism \cite{shoeybi2019megatron}.}

IST and 3D parallelism were introduced independently and concurrently in 2019. While sharing some conceptual similarities (combination of data and model parallelism), they have a different focus. The key distinction is the way how model parallelism is implemented. Namely, 3D parallelism combines \textit{Pipeline} and \textit{Tensor} parallelism.

Pipeline parallelism suggests splitting the model’s layers across computing nodes which requires additional transmission of tensors for every forward and backward step. Tensor parallelism breaks the layers (MLP blocks and attention heads) into parts, which creates a need for additional synchronization between GPUs. Thus, 3D parallelism makes computation on nodes dependent on mutual communications, unlike IST.

Independent Subnetwork Training decomposes the model into smaller subnets independently trained in parallel. IST does not need synchronization during local updates and requires the transmission of fewer parameters, which decreases per-step communication costs.

3D parallelism introduces significant communication overhead, which greatly increases computational cost \cite{bian2023does}. The problem described is especially relevant for public compute clouds (such as Amazon EC2), which often suffer from slow interconnects. At the same time, IST is most beneficial for such a setup as it improves communication efficiency by design. Moreover, 3D parallelism is incompatible with a standard federated learning setting. At the same time, the IST-like approach is a viable technique, as every network is independent and can be trained on resource-constrained devices \cite{dun2023efficient}. In summary, both IST and 3D parallelism are viable approaches with pros and cons and are best suited for different scenarios.

\end{document}